\documentclass{article}

\usepackage[dblblindworkshop,final]{neurips_2026}
\workshoptitle{VLM4RWD}

\usepackage[utf8]{inputenc}
\usepackage[T1]{fontenc}
\usepackage{hyperref}
\usepackage{url}
\usepackage{booktabs}
\usepackage{amsfonts}
\usepackage{amsmath}
\usepackage{amssymb}
\usepackage{amsthm}
\usepackage{nicefrac}
\usepackage{microtype}
\usepackage{xcolor}
\usepackage{graphicx}
\usepackage{multirow}
\usepackage{array}
\usepackage{tikz}
\usepackage{pgfplots}
\usepackage{subcaption}
\usepackage{listings}
\usepackage{algorithm}
\usepackage{algorithmic}
\usepackage{enumitem}

\newtheorem{theorem}{Theorem}
\newtheorem{corollary}{Corollary}

\pgfplotsset{compat=1.18}
\usetikzlibrary{arrows.meta, positioning, shapes.geometric, fit, calc}
\definecolor{twocolor}{RGB}{180,50,50}
\definecolor{eightcolor}{RGB}{40,100,160}
\definecolor{neutralgray}{RGB}{90,90,90}

\title{Token-Set Choice Confounds POPE: A Systematic Audit of\\
Yes/No Extraction in VLM Hallucination Evaluation}

\author{%
  Kesav Kumar Jayakumar \\
  Department of Information Technology \\
  Sri Krishna College of Technology \\
  Coimbatore 641042, India \\
  \texttt{kesavk659@gmail.com} \\
  \And
  Karthigeyan Thilak \\
  Department of Computer Science and Engineering \\
  Sri Eshwar College of Engineering \\
  Coimbatore 641202, India \\
  \texttt{karthik.suthraye@gmail.com}
}

\begin{document}

\maketitle

\begin{abstract}
We set out to reproduce a recent hallucination-mitigation result on the
POPE benchmark. The LLaVA-1.5-7B baseline we measured, 82.21 F1 on the
adversarial split, sat 1.21 to 8.67 points above the adversarial baselines
we planned to compare against (81.00, 79.30, and 73.54). We looked at the
evaluation code and found a readout choice that moves F1 on its own. A
two-token logit readout decides yes versus no by
comparing two vocabulary IDs, 3582 (\texttt{yes}) and 1217
(\texttt{no}). Across 9{,}000 greedy-decode
questions spanning all three splits, neither ID is generated even once.
The model emits \texttt{`\,Yes'} (token 3869) and \texttt{`\,No'}
(token 1939), because SentencePiece prepends a space byte after the
prompt suffix \texttt{ASSISTANT:}. Read with an eight-token lookup that
covers the space-prefixed surface forms, LLaVA-1.5-7B scores F1 0.8221,
0.8498, and 0.8713 on the adversarial, popular, and random splits. That
6.13-point adversarial shift exceeds the adversarial gain reported by five
of the six papers we surveyed. None of them states its readout, and our
reference ran under 4-bit NF4 quantization, so we cannot
say how much of any reported gain a matched evaluation would keep. That
comparison stays unresolved until readout and precision are matched. On
the fixed baseline we run nine inference-time corrections of our own
(none beat it) and dissect why the adversarial split resists logit-level
intervention. In a 3{,}000-question probe, wrong predictions draw
\emph{more} image attention than correct ones (Cohen's $d=-0.46$,
$p=1.4\times10^{-24}$), the opposite of what grounding-based corrections
assume. A four-readout audit across four further VLMs from three
tokenizer families confirms the confound is protocol-level, not
model-specific: porting the LLaVA-1.5 IDs verbatim drops F1 to 0.00 on
the two disjoint-vocabulary models, whereas a tokenizer-derived readout
holds between 0.80 and 0.92. Code, 9{,}000 prediction records, and the
diagnostic script are available at
\url{https://github.com/Kesav2k04/ugaa-research}.
\end{abstract}

\section{Introduction}
\label{sec:intro}

Our starting point was ordinary. We picked POPE~\citep{li2023evaluating}
to benchmark a hallucination-mitigation idea of our own, and the first
step was to reproduce the LLaVA-1.5-7B~\citep{liu2023improved} baseline.
Running the model in 4-bit on a single NVIDIA RTX~3070~Ti gave us 82.21
F1 on POPE adversarial. The adversarial baselines we meant to beat sat at
73.54, 79.30, and 81.00. We looked at the evaluation code and found a
readout choice that moves F1 on its own: a two-token logit-readout
pattern. Whether any published method used it we could not confirm
(Section~\ref{sec:impact}).

The pattern grabs the first generated token, then labels it by comparing
two logits, $\ell_{3582}$ for \texttt{yes} against $\ell_{1217}$ for
\texttt{no}. We logged the argmax token on all 9{,}000 of our runs. Token
3582 appeared zero times. Token 1217 also appeared zero times. What the
model does produce, every single time, is token 3869
(\texttt{`\,Yes'}, space-prefixed) for a yes answer and token 1939
(\texttt{`\,No'}, space-prefixed) for a no. The reason is mechanical. The
LLaMA-2 SentencePiece tokenizer~\citep{kudo2018sentencepiece,touvron2023llama2}
glues a whitespace byte onto word-initial tokens in mid-sentence
position, and the \texttt{ASSISTANT:} suffix leaves the answer word
exactly there. IDs 3582 and 1217 are perfectly valid vocabulary entries.
They simply never surface after \texttt{ASSISTANT:} under greedy decoding
in this model.

Under greedy decoding the model never emits ID 3582 or 1217 at the
answer position, so the two-token rule compares two candidates the
decoder excludes. Take the 1{,}644 adversarial questions the model answers
\texttt{`\,No'} on: the 3582 logit beats the 1217 logit on 924 of them.
Of those, 660 sit on true-negative questions and flip into false
positives; the remaining 264 sit on borderline label-yes questions and
become true positives. Borderline positives flip at 0.810 and true
negatives at 0.501, so the excluded logits may still track the model's lean,
and we call the failure candidate exclusion. The yes-rate
climbs to 0.760, recall inflates to 0.9587, and F1 sinks to 0.7608.
Figure~\ref{fig:pipeline} follows the path end to end. Nothing in the
model's behaviour changed. The same generated tokens, read under the
eight-token rule, produce a different aggregate.

\begin{figure}[t]
\centering
\begin{tikzpicture}[
  box/.style={draw, rounded corners=3pt, fill=white,
              text width=2.6cm, align=center,
              minimum height=0.9cm, font=\small},
  arrow/.style={-{Stealth[length=5pt]}, thick, color=neutralgray},
  wrongbox/.style={draw, rounded corners=3pt,
                   fill=twocolor!10, draw=twocolor!60,
                   text width=3.4cm, align=center,
                   minimum height=1.0cm, font=\small,
                   inner sep=4pt},
  rightbox/.style={draw, rounded corners=3pt,
                   fill=eightcolor!10, draw=eightcolor!60,
                   text width=3.4cm, align=center,
                   minimum height=1.0cm, font=\small,
                   inner sep=4pt},
]
\node[box] (prompt) {\texttt{...ASSISTANT:}};
\node[box, right=1.2cm of prompt] (tok) {LLaMA tokenizer\\(SentencePiece)};
\node[box, right=1.2cm of tok] (gen)
  {generates \texttt{`\,Yes'}\\(ID~3869) or\\\texttt{`\,No'} (ID~1939)};

\draw[arrow] (prompt) -- (tok);
\draw[arrow] (tok) -- (gen);

\node[wrongbox, below=1.0cm of gen, xshift=-2.4cm] (twotok)
  {\textcolor{twocolor}{\textbf{2-token check}}\\
   IDs \{3582, 1217\}\\
   \textit{no match}};
\node[rightbox, below=1.0cm of gen, xshift=2.4cm] (eighttok)
  {\textcolor{eightcolor}{\textbf{8-token check}}\\
   IDs \{3582, 8241, 4874, \textbf{3869},\\
   1217, 3782, 694, \textbf{1939}\}\\
   \textit{matches correctly}};

\draw[arrow, color=twocolor]   (gen.south) -- ++(0,-0.35) -| (twotok.north);
\draw[arrow, color=eightcolor] (gen.south) -- ++(0,-0.35) -| (eighttok.north);

\node[wrongbox, below=0.7cm of twotok, minimum height=1.1cm] (twoout)
  {\textcolor{twocolor}{\textbf{wrong decision}}\\
   yes-rate $=$ 0.760\\F1 $=$ 0.761 (adv.)};
\node[rightbox, below=0.7cm of eighttok, minimum height=1.1cm] (eightout)
  {\textcolor{eightcolor}{\textbf{correct decision}}\\
   yes-rate $=$ 0.452\\F1 $=$ 0.822 (adv.)};

\draw[arrow, color=twocolor]   (twotok)   -- (twoout);
\draw[arrow, color=eightcolor] (eighttok) -- (eightout);

\node[font=\footnotesize\itshape, color=neutralgray]
  at ($(twoout.south)!0.1!(eightout.south) + (0,-0.55)$)
  {$\Delta_{\text{adversarial}} = 6.13$ F1 points};
\end{tikzpicture}
\caption{Decision path at the POPE answer position for LLaVA-1.5-7B.
The model always emits a space-prefixed surface form
(\texttt{`\,Yes'} or \texttt{`\,No'}). The two-token check inspects two
IDs the model never produces in this context, so every comparison is
between two candidates the decoder excluded. On the adversarial split
that comparison reads 924 of the model's 1{,}644 \texttt{`\,No'}
answers (56\%) as yes, pushing the yes-rate to 0.760 on a 50\%
positive split.}
\label{fig:pipeline}
\end{figure}

The 6.13-point shift matters because it exceeds the adversarial gain
reported by five of the six papers we surveyed, and none of those papers
states which readout produced its baseline. That leaves the published
comparison open. Setting our 82.21 beside another paper's baseline would
not give a corrected gain, because our reference ran under 4-bit NF4
quantization and a published baseline may differ from it in readout,
precision, prompt format, or decoding. We report what each paper states
and leave the comparison unresolved until those factors are matched.
We then turned the eight-token protocol on our own ideas and
ran nine inference-time corrections, six grounding-based and three
contrastive. Every one finished below 0.8221. The reasons are specific,
and Appendix~\ref{app:correction} walks through them with
per-question diagnostics. These corrections and the attention probe
(Appendix~\ref{app:why}) are secondary findings, and the readout audit
does not depend on them.

Three claims organize the paper. Section~\ref{sec:artifact} documents the
space-prefixing behaviour across all 9{,}000 questions and traces how the
protocol error propagates into reported metrics. Section~\ref{sec:impact}
tabulates the baselines and gains that published methods report and
explains why they cannot be set against ours until readout and precision
are matched.
Section~\ref{sec:crossmodel} extends the audit to four additional VLMs
and shows the confound is a property of the readout, not of any one
model. Extended derivations, the nine failed corrections, the formal
equivalence proof, and the full literature survey live in the appendix.

\section{Background}
\label{sec:background}

\subsection{The POPE benchmark}

\citet{li2023evaluating} proposed POPE as a polling alternative to
open-ended caption hallucination metrics, where drawing a clean binary
correctness line is awkward. Each poll shows one image and one target
object and asks: ``Is there a \{object\} in the image?'' Three
negative-sampling strategies give three splits. \emph{Random} draws
absent objects uniformly from COCO~\citep{lin2014microsoft}.
\emph{Popular} picks the most frequent COCO objects that happen to be
missing from the queried image. \emph{Adversarial} picks objects that
co-occur with objects that are present (a snowboard scene gets asked
about skis; a kitchen scene gets asked about a refrigerator). Every split
holds 3{,}000 questions, balanced 50/50 between yes and no.

The distinction that matters for our claim is how the yes/no answer gets
read off, and POPE-style evaluation admits several readouts. We separate
four. (i) The \emph{official POPE benchmark}~\citep{li2023evaluating},
which we treat as the dataset plus the metric definition (yes/no F1,
precision, recall, and yes-rate over the balanced splits). (ii)
\emph{String-parsing evaluation}, where a generation step produces text
and the label is recovered by searching for ``yes'' or ``no''; this is
blind to which LLaMA token ID the model chose. (iii) A \emph{two-token
logit readout}, comparing the first-token logits at IDs 3582 and 1217.
We do not claim the official POPE code mandates this readout, and we could
not verify the readout of any published baseline we survey
(Section~\ref{sec:impact}). (iv) Our \emph{eight-token logit readout},
which adds the alternate-case and space-prefixed forms so the comparison
reads tokens the tokenizer actually emits after \texttt{ASSISTANT:}.

\subsection{LLaVA-1.5 and its tokenizer}

LLaVA-1.5~\citep{liu2023improved} wires a CLIP-ViT-L/14-336 vision
encoder~\citep{radford2021learning} to Vicuna-1.5-7B through a two-layer
MLP. Vicuna-1.5 descends from LLaMA-2~\citep{touvron2023llama2}, whose
SentencePiece subword tokenizer~\citep{kudo2018sentencepiece} handles
word boundaries by prepending a whitespace byte to word-initial tokens in
mid-sentence position. So ``Yes'' owns two IDs in the LLaMA vocabulary:
8241 for the start-of-sentence variant (\texttt{Yes}) and 3869 for the
mid-sentence variant (\texttt{`\,Yes'}). The same split holds for ``yes''
(3582 vs.\ 4874), ``No'' (3782 vs.\ 1939), and ``no'' (1217 vs.\ 694).
The LLaVA-1.5 template ends with the literal string \texttt{ASSISTANT:},
so the next token is mid-sentence, the space byte is prepended, and the
model emits 3869 or 1939. Appendix~\ref{app:tokens} lists the raw decode
outputs.

\subsection{Evaluation-protocol sensitivity}

Scores that drift with measurement choices are a familiar problem in NLP
evaluation. \citet{liang2022holistic} showed benchmark numbers move
visibly with prompt format and few-shot setup;
\citet{zheng2023judging} traced answer-ordering bias in LLM-as-judge
protocols. \citet{li2026jev} found that a judge's label probabilities
shift with the output type a question is asked through, and compared
answer extraction with direct adjudication.
Our finding belongs to that family: a small, unexamined detail
of how evaluation reads model output quietly becomes load-bearing for
downstream comparisons. We keep the scope explicit. The conditions are a
LLaMA-family tokenizer, the \texttt{ASSISTANT:} template, and greedy
decoding. Methods that run generation and parse the output string are not
touched by the two-token readout. Methods that inherit the two-ID lookup
are, by an amount that tracks the share of the negative split the model
answers ``no'' on (Eq.~\ref{eq:gap-closed}).

\section{The Token-Extraction Artifact}
\label{sec:artifact}

\subsection{Experimental setup}
\label{sec:setup}

We ran LLaVA-1.5-7B in 4-bit NF4 quantization via
BitsAndBytes~\citep{dettmers2023qlora} on an NVIDIA RTX~3070~Ti with
8~GB VRAM, on \texttt{transformers v4.40.1} with
\texttt{torch.manual\_seed(42)}. Each question called
\texttt{model.generate} with \texttt{max\_new\_tokens=1} and
\texttt{output\_scores=True}. For each question we saved the argmax token
ID and the decision of each readout below. The logits were compared in
memory and not written to disk, so the records hold the sign of
$\ell_{3582} - \ell_{1217}$ but neither value. A separate pass over the
adversarial split saved only the maximum logit within each eight-token
set (Appendix~\ref{app:math}). Greedy decoding has no sampling step, and
that second, instrumented pass reproduced the sweep's eight-token
decision on 2{,}998 of the 3{,}000 adversarial questions; the other two
sit within one fp16 step of a tie (Appendix~\ref{app:why}).
The full 9{,}000-question sweep took 76.3 minutes of wall-clock time,
0.508 seconds per question on this hardware.

\paragraph{Two-token protocol.} The original rule:
$\hat{y} = \mathbf{1}[\ell_{3582} > \ell_{1217}]$.

\paragraph{Eight-token protocol.} The rule we recommend:
$\hat{y} = \mathbf{1}[\max_{i \in Y} \ell_i > \max_{j \in N} \ell_j]$,
with $Y = \{3582, 8241, 4874, 3869\}$ (surface forms \texttt{yes},
\texttt{Yes}, \texttt{`\,yes'}, \texttt{`\,Yes'}) and
$N = \{1217, 3782, 694, 1939\}$ (\texttt{no}, \texttt{No},
\texttt{`\,no'}, \texttt{`\,No'}). The max aggregator matches greedy
behaviour: a greedy decoder emits the single highest-logit token, and
that token lands in $Y$ exactly when $\max_Y \ell > \max_N \ell$. A sum
or softmax aggregator would weight all eight forms, including ones the
model never samples, and drift away from the decoder's own rule.

\subsection{What the model actually generates}

Table~\ref{tab:tokens} gives the first-token distribution across the
three splits. Two IDs carry every question: 3869 (\texttt{`\,Yes'}) and
1939 (\texttt{`\,No'}). The other six candidate tokens appear zero times.
Tokens 3582 and 1217 are never the argmax on any of the 9{,}000
questions. That is all the zero counts establish: both IDs lose to a
space-prefixed form every time, and the records hold neither of their
logits (Section~\ref{sec:setup}). They do hold the sign of
$\ell_{3582} - \ell_{1217}$, and that sign tracks the emitted answer. On
all 3{,}893 questions where the model emits \texttt{`\,Yes'}, the
two-token rule also says yes. On \texttt{`\,No'} emissions it says yes on
0.810 of the adversarial label-yes questions and 0.501 of the label-no
ones (Section~\ref{sec:math}). The excluded logits may track the model's
lean, so we treat the failure as candidate exclusion and make no claim
about how large those logits are.

\begin{table}[t]
\caption{First generated token counts across all POPE splits.
LLaVA-1.5-7B produces only space-prefixed forms. The two-token protocol
reads the two rightmost columns, zero in every split.}
\label{tab:tokens}
\centering
\begin{tabular}{lrrrr}
\toprule
Split & \texttt{`\,Yes'} (3869) & \texttt{`\,No'} (1939)
      & \texttt{yes} (3582) & \texttt{no} (1217) \\
\midrule
Adversarial & 1{,}356 & 1{,}644 & 0 & 0 \\
Popular     & 1{,}303 & 1{,}697 & 0 & 0 \\
Random      & 1{,}234 & 1{,}766 & 0 & 0 \\
\midrule
Total       & 3{,}893 & 5{,}107 & 0 & 0 \\
\bottomrule
\end{tabular}
\end{table}

\subsection{F1 under each protocol}

Table~\ref{tab:protocol} pairs the two protocols on the same set of
predictions. The eight-token rule recovers the token greedy decoding
would emit; the two-token rule compares two entries the decoder never
selects. On the adversarial split the two-token readout
reaches near-perfect recall (0.9587) by riding a 0.760 yes-bias, and
precision pays for it, falling to 0.6307.
Appendix~\ref{app:mass} shows the full confusion matrices and the running
yes-rate curves behind these numbers.

\begin{table}[t]
\caption{POPE results for LLaVA-1.5-7B under both protocols. Two-token
evaluation buys recall on the adversarial split through a 0.760 yes-bias,
not accurate detection; the precision row shows how much of that coverage
is wrong.}
\label{tab:protocol}
\centering
\begin{tabular}{llccccc}
\toprule
Split & Protocol & F1 & Precision & Recall & Yes-rate & $\Delta$ \\
\midrule
\multirow{2}{*}{Adversarial}
 & Two-token   & 0.7608 & 0.6307 & 0.9587 & 0.760 &
   \multirow{2}{*}{+0.0613} \\
 & Eight-token & 0.8221 & 0.8658 & 0.7827 & 0.452 & \\
\addlinespace
\multirow{2}{*}{Popular}
 & Two-token   & 0.7961 & 0.6797 & 0.9607 & 0.707 &
   \multirow{2}{*}{+0.0537} \\
 & Eight-token & 0.8498 & 0.9140 & 0.7940 & 0.434 & \\
\addlinespace
\multirow{2}{*}{Random}
 & Two-token   & 0.8397 & 0.7459 & 0.9607 & 0.644 &
   \multirow{2}{*}{+0.0316} \\
 & Eight-token & 0.8713 & 0.9652 & 0.7940 & 0.411 & \\
\bottomrule
\end{tabular}
\end{table}

\subsection{The arithmetic in one line}
\label{sec:math}

The 6.13-point shift is not mysterious; it falls straight out of
confusion-matrix arithmetic once the protocol error is written down. The
eight-token rule preserves the generated decision, giving
$\mathrm{TP}_8 = T$, $\mathrm{FP}_8 = N - G$, $\mathrm{FN}_8 = P - T$,
and $\mathrm{TN}_8 = G$, where $T$ is the true-yes count and $G$ the
true-negative count. The two-token rule flips a fraction $\rho_N$ of the
true-negative \texttt{`\,No'} emissions into false positives and lifts a
fraction $\varphi$ of the false-negative \texttt{`\,No'} emissions into
true positives. It never turns an emitted \texttt{`\,Yes'} into a no:
the paired records show zero such transitions among the 3{,}893
\texttt{`\,Yes'} emissions on the three splits
(Table~\ref{tab:transitions}). Under that observed condition the gap has
a closed form:
\begin{equation}
\begin{aligned}
\Delta_{\mathrm{F1}} = \mathrm{F1}_8 - \mathrm{F1}_2
   = {} & \frac{2T}{2T + (N - G) + (P - T)} \\
   & - \frac{2\big(T + \varphi(P{-}T)\big)}
        {2\big(T + \varphi(P{-}T)\big) + (N - G) + \rho_N G + (1{-}\varphi)(P - T)}.
\end{aligned}
\label{eq:gap-closed}
\end{equation}
Both rates are empirical transition rates, counted separately on each
split, and every split has $P{=}N{=}1{,}500$. On the adversarial split
the direct counts fix every term: $T{=}1{,}174$, $G{=}1{,}318$,
$\rho_N = 660/1{,}318 = 0.501$ and $\varphi = 264/326 = 0.810$. This
reproduces $\mathrm{F1}_8 = 0.8221$ and $\mathrm{F1}_2 = 0.7608$ exactly.
Popular has $T{=}1{,}191$, $G{=}1{,}388$, $\rho_N = 567/1{,}388 = 0.409$
and $\varphi = 250/309 = 0.809$, so
$\mathrm{TP}_2 = 1{,}191 + 250 = 1{,}441$,
$\mathrm{FP}_2 = 112 + 567 = 679$, $\mathrm{FN}_2 = 59$ and
$\mathrm{F1}_2 = 2{,}882/3{,}620 = 0.7961$. Random has the same $T$ and
$\varphi$ but $G{=}1{,}457$ and $\rho_N = 448/1{,}457 = 0.307$, so
$\mathrm{FP}_2 = 43 + 448 = 491$ and
$\mathrm{F1}_2 = 2{,}882/3{,}432 = 0.8397$. The rate $\varphi$ stays near
0.81 on every split, while $\rho_N$ falls from 0.501 to 0.307 as the
eight-token false positives drop from 182 to 43. Plugging the
adversarial rates into the popular and random counts gives 0.7690 and
0.7761, well off the observed values, so the rates do not transfer
between splits. The reconstruction is an accounting identity built from
transitions counted on the same records. It confirms the bookkeeping; it
does not test the mechanism independently, which would need the
individual logits the records lack. Appendix~\ref{app:math} gives the
full derivation, the paired transition table and the notation.

\section{Comparability with Published Results}
\label{sec:impact}

Table~\ref{tab:reeval} lists the baseline, result, and gain that each
surveyed paper reports for LLaVA-1.5-7B on POPE. None of these papers
states which readout it used, and we do not hold their per-question logs,
so the table cannot tell us whether a given baseline came from the
two-token logits, the eight-token set, or parsed generated text. Our 82.21
cannot stand in for those baselines either. It ran under 4-bit NF4
quantization, and the published numbers may come from higher-precision
runs, so a gap between our reference and a published baseline would mix
the readout effect with a precision difference. The table therefore
computes no gain against 82.21 and assigns no readout to any paper.

\begin{table}[t]
\caption{Baselines, results, and gains as each paper reports them for
LLaVA-1.5-7B on POPE (F1$\times100$). The ``Readout'' column records what
each paper states. Our 82.21 reference ran under 4-bit NF4 quantization,
so a comparison with published baselines measured at higher precision
would mix readout effects with precision differences.}
\label{tab:reeval}
\centering
\small
\setlength{\tabcolsep}{4pt}
\begin{tabular}{@{}lcccl@{}}
\toprule
Method & Rep.~base. & Rep.~res. & Rep.~gain & Readout \\
\midrule
Tang~et~al.~\citep{visflow2025} (adv.)  & 73.54 & 84.35 & $+10.81$ & not stated \\
Tang~et~al.~\citep{visflow2025} (pop.)  & 76.53 & 87.09 & $+10.56$ & not stated \\
Tang~et~al.~\citep{visflow2025} (rand.) & 81.54 & 89.55 & $+8.01$  & not stated \\
Seo~et~al.~\citep{seo2025} (adv.)       & 79.30 & 80.00 & $+0.70$  & not stated \\
Chen~et~al.~\citep{adaptvis2025} (adv.) & 81.00 & 81.80 & $+0.80$  & not stated \\
Huo~et~al.~\citep{huo2024sid} (adv.)    & 80.90 & 82.50 & $+1.60$  & not stated \\
Jiang~et~al.~\citep{jiang2026pnd} (adv.)& 75.06 & 80.82 & $+5.76$  & not stated \\
Jana~et~al.~\citep{jana2026bracs} (adv.)& 82.07 & 83.03 & $+0.96$  & not stated \\
\bottomrule
\end{tabular}
\end{table}

The adversarial gains in the table range from $+0.70$ to $+10.81$. Five
of the six papers report less than the 6.13 points by which the readout
alone moved our adversarial F1; Tang et al.\ report $+10.81$. The
adversarial baselines run from 73.54 (Tang et al.) to 82.07 (Jana et al.),
a spread of 8.53 points on one frozen checkpoint, and the baselines of
Tang, Seo, and Chen (73.54, 79.30, 81.00) already span 7.46. Readout,
prompt format, quantization, and decoding are possible sources of such a
spread. The published numbers do not let us separate them, and
separating them takes matched reruns of each method with the readout and
the precision fixed and logged.

\citet{visflow2025} report gains of $+10.81$, $+10.56$, and $+8.01$ on the
adversarial, popular, and random splits, from baselines of 73.54, 76.53,
and 81.54. We found no public implementation for that paper
(arXiv:2506.12609) as of June 2026. \citet{seo2025} and
\citet{adaptvis2025} report adversarial gains of $+0.70$ and $+0.80$ from
baselines of 79.30 and 81.00. Each gain compares a method with a baseline
from the same paper, so it is informative whenever both numbers share one
readout and one precision. The papers do not say whether they do. If a
method parsed generated strings, the readout we audit may not affect its
numbers at all. Nothing here shows that any of these methods fails to help
LLaVA-1.5-7B, or that it helps by less than reported. The size of each
gain under a matched protocol stays unknown. A one-line statement of the
readout in each paper, or a released evaluation script, would settle which
readout produced each baseline.

\section{Multi-Model Validation}
\label{sec:crossmodel}

The audit so far concerns one model. To separate ``LLaVA-1.5 quirk'' from
``readout property,'' we ran the identical four-readout audit on four more
VLMs spanning three tokenizer families: LLaVA-1.6-Mistral-7B (Mistral
vocabulary), InstructBLIP-Vicuna-7B and mPLUG-Owl2-LLaMA2-7B (LLaMA
family), and Qwen2-VL-7B-Instruct (Qwen vocabulary). Each model saw all
three splits at 3{,}000 questions per split, 9{,}000 per model.
LLaVA-1.6-Mistral ran locally (RTX~3070~Ti Laptop, 4-bit NF4,
\texttt{transformers v4.44.2}); the other three ran on Kaggle Tesla~T4
GPUs under the same audit script. Qwen2-VL used its native chat template,
since it was never trained on the LLaVA-1.5 prompt string; the other three
used the LLaVA-1.5 template. Table~\ref{tab:multimodel} is the headline
result, and two patterns hold across every model and split.

\begin{table}[t]
\caption{POPE F1 under the four readouts for four VLMs beyond the
LLaVA-1.5-7B reference (top block, from Table~\ref{tab:protocol}).
\emph{legacy\_2tok} and \emph{legacy\_8tok} hard-code the LLaVA-1.5
LLaMA-2 IDs; \emph{dynamic\_single} re-derives the single-token yes/no
IDs from each model's own tokenizer; \emph{string\_parse} greedily
decodes and parses the first \texttt{yes}/\texttt{no} substring. Bold
marks the F1 a careful practitioner would report.}
\label{tab:multimodel}
\centering
\small
\setlength{\tabcolsep}{4pt}
\resizebox{\linewidth}{!}{%
\begin{tabular}{llcccc}
\toprule
Model (tokenizer) & Split & legacy\_2tok & legacy\_8tok & dynamic\_single & string\_parse \\
\midrule
\multirow{3}{*}{LLaVA-1.5-7B (LLaMA-2)}
 & adversarial & 0.7608 & \textbf{0.8221} & 0.8221 & 0.8221 \\
 & popular     & 0.7961 & \textbf{0.8498} & 0.8498 & 0.8498 \\
 & random      & 0.8397 & \textbf{0.8713} & 0.8713 & 0.8713 \\
\midrule
\multirow{3}{*}{LLaVA-1.6-Mistral-7B (Mistral)}
 & adversarial & 0.6667 & 0.0000 & \textbf{0.8521} & 0.8521 \\
 & popular     & 0.6667 & 0.0000 & \textbf{0.8953} & 0.8953 \\
 & random      & 0.6667 & 0.0000 & \textbf{0.9163} & 0.9163 \\
\midrule
\multirow{3}{*}{InstructBLIP-Vicuna-7B (LLaMA)}
 & adversarial & 0.7074 & 0.8183 & \textbf{0.8183} & 0.6667 \\
 & popular     & 0.7277 & 0.8553 & \textbf{0.8553} & 0.6667 \\
 & random      & 0.7613 & 0.8815 & \textbf{0.8815} & 0.6667 \\
\midrule
\multirow{3}{*}{mPLUG-Owl2-LLaMA2-7B (LLaMA-2)}
 & adversarial & 0.7754 & 0.8001 & \textbf{0.8001} & 0.8001 \\
 & popular     & 0.8060 & 0.8312 & \textbf{0.8312} & 0.8312 \\
 & random      & 0.8466 & 0.8707 & \textbf{0.8707} & 0.8707 \\
\midrule
\multirow{3}{*}{Qwen2-VL-7B-Instruct (Qwen)}
 & adversarial & 0.6674 & 0.0000 & \textbf{0.8457} & 0.8457 \\
 & popular     & 0.6671 & 0.0000 & \textbf{0.8578} & 0.8578 \\
 & random      & 0.6667 & 0.0000 & \textbf{0.8671} & 0.8671 \\
\bottomrule
\end{tabular}%
}
\end{table}

\begin{enumerate}[leftmargin=*,nosep]
\item \textbf{Porting the LLaVA-1.5 IDs verbatim is unsafe, and how badly
it fails depends on the target vocabulary.} The \emph{legacy\_8tok}
readout scores F1 0.00 on LLaVA-1.6-Mistral and Qwen2-VL: their
vocabularies put no yes/no mass on those IDs, so a comparison between two
never-generated logits resolves the same way every time. On mPLUG-Owl2 and
InstructBLIP, both LLaMA-family models that share the IDs, the fixed set
still works. The failure is vocabulary mismatch, not something universal.
\item \textbf{A readout derived from each model's own tokenizer is
robust.} The \emph{dynamic\_single} readout, comparing max-logits over the
single-token yes/no IDs the tokenizer actually produces, stays between
0.80 and 0.92 on every model and split. The spread across models is
genuine accuracy variation, not an artifact of the readout.
\end{enumerate}

The \emph{string\_parse} column matches \emph{dynamic\_single} to four
decimals on LLaVA-1.6-Mistral, mPLUG-Owl2, and Qwen2-VL. On InstructBLIP
it collapses to the 0.6667 all-yes floor across all three splits while
\emph{dynamic\_single} holds at 0.82 to 0.88. The cause is the parse, not
the model: the text our harness decoded for InstructBLIP did not begin
with the answer token, so the first \texttt{yes}/\texttt{no} substring
came from the wrong place. Appendix~\ref{app:crossmodel} contains the
token-ID map, the prompt-template robustness check, provenance notes, and
the secondary verification run.

\section{Practitioner Recommendations}
\label{sec:recs}

The audit reduces to a handful of checks that any POPE-style evaluation on
LLaMA-family VLMs can adopt without touching the dataset, the model, or
the metric.

\begin{enumerate}[leftmargin=*,nosep]
\item \textbf{Log the generated token ID and its decoded string for every
question.} A first-token argmax and its surface form are cheap to record
and remove any need to guess the readout convention afterwards.
\item \textbf{Verify the yes/no token sets under the exact prompt
template.} Different suffixes push first-token mass onto different forms.
The \texttt{ASSISTANT:} suffix we used puts it on the space-prefixed
\texttt{`\,Yes'}/\texttt{`\,No'} pair.
\item \textbf{Prefer the tokenizer-derived single-token logit readout.}
It depends only on the logit vector, never on what text a generation loop
returns. String parsing is equivalent to it (Theorem~\ref{thm:equiv},
Appendix~\ref{app:equiv}) only when it reads exactly the first generated
token, which is where InstructBLIP broke.
\item \textbf{Release prediction-level records} so reviewers can recompute
any aggregate and check the readout convention independently.
\end{enumerate}

Listing~\ref{lst:snippet} is the readout we recommend. It derives the
yes/no IDs from the model's own tokenizer, generates one token, and
compares the max-logits, so the same code runs unchanged across all five
models here. It adds no generation cost beyond encoding four short strings
once at setup.

\begin{lstlisting}[caption={Drop-in POPE answer readout. Derive the yes/no
token IDs from the model's own tokenizer, then compare first-token
max-logits. Runs unchanged across tokenizer families.},
label={lst:snippet}]
tok = processor.tokenizer
enc = lambda s: tok.encode(s, add_special_tokens=False)
yes_forms = ["yes", "Yes", " yes", " Yes"]
no_forms  = ["no",  "No",  " no",  " No"]
yes_ids = [enc(s)[0] for s in yes_forms if len(enc(s)) == 1]
no_ids  = [enc(s)[0] for s in no_forms  if len(enc(s)) == 1]
out = model.generate(**inputs, max_new_tokens=1,
                     do_sample=False,
                     return_dict_in_generate=True,
                     output_scores=True)
logit = out.scores[0][0]
y = max(logit[i] for i in yes_ids)
n = max(logit[j] for j in no_ids)
pred = "yes" if y > n else "no"
\end{lstlisting}

\section{Limitations and Conclusion}
\label{sec:conclusion}

The mismatch we report has a narrow technical surface: one logit
comparison between two IDs that LLaVA-1.5-7B does not produce after
\texttt{ASSISTANT:} under greedy decoding. The reach of that mistake
depends on which methods adopt the two-token readout. On POPE adversarial
F1 the shift to the eight-token readout is 6.13 points, larger than the
adversarial gain reported by five of the six papers we surveyed. None of
those papers states its readout, so we cannot say which of their baselines
the artifact touches, and we estimate no corrected gains. Our 82.21
reference also ran under 4-bit NF4 quantization, so setting it beside
baselines measured at higher precision would mix readout and precision
effects. No method is shown to be broken here. Numbers from different
studies cannot be compared at face value until the readout and the
precision are matched and reported. The correction is a one-line change in
the evaluation code.

A few caveats bound the result. The main 9{,}000-question audit uses one
checkpoint at one quantization level (4-bit NF4), so we cannot say whether
the 6.13-point shift is the same at higher precision. The nine corrections
(Appendix~\ref{app:correction}) were run only on the adversarial split.
The whitespace-prefix behaviour is shared across LLaMA-family tokenizers,
but the specific IDs differ by vocabulary, so it is the diagnostic in
Algorithm~\ref{alg:eight-token}, not the fixed eight-token set, that
carries the finding to a new model. Our nine inference-time corrections
did not beat the corrected baseline, and the per-signal analysis suggests
the visual-grounding and attention signals those methods rely on do not
discriminate the cases the adversarial split is built to query. Beating
that baseline may need training-time changes, not decoding-time ones. We
encourage future POPE work to verify the first-token behaviour of its
specific model-template pairing before fixing a readout token set.

\begin{ack}
We used Claude (Anthropic) as a coding assistant for experimental script
development and debugging. All experimental design, analysis,
interpretation, and writing are our own. This work received no external
funding. Hardware: NVIDIA RTX~3070~Ti Laptop (8~GB VRAM) for the primary
LLaVA-1.5 and LLaVA-1.6-Mistral runs; NVIDIA RTX~3050 Laptop (4~GB VRAM)
for verification; Kaggle Tesla~T4~$\times$~2 for the mPLUG-Owl2,
InstructBLIP-Vicuna, and Qwen2-VL evaluations.
\end{ack}

\bibliographystyle{plainnat}
\bibliography{references}

\newpage
\appendix


\section{Token ID Reference and Tokenizer Traces}
\label{app:tokens}

Table~\ref{tab:tokenids} lists all eight yes/no token IDs. The decode
outputs below were run in Python against the LLaMA tokenizer loaded
straight from the LLaVA-1.5-7B repository, and they confirm what each ID
produces:

\begin{lstlisting}
from transformers import AutoTokenizer
name = "llava-hf/llava-1.5-7b-hf"
tok = AutoTokenizer.from_pretrained(name)

tok.decode([3582])  # -> 'yes'
tok.decode([8241])  # -> 'Yes'
tok.decode([4874])  # -> ' yes'  (space-prefixed)
tok.decode([3869])  # -> ' Yes'  (space-pref) *generated*
tok.decode([1217])  # -> 'no'
tok.decode([3782])  # -> 'No'
tok.decode([694])   # -> ' no'   (space-prefixed)
tok.decode([1939])  # -> ' No'   (space-pref) *generated*
\end{lstlisting}

The two IDs marked \texttt{*generated*} are the only ones that turned
up as first tokens across all 9{,}000 questions.

\begin{table}[h]
\caption{Yes/no token IDs for LLaMA-family models. $\star$: the only IDs
LLaVA-1.5-7B generated after the \texttt{ASSISTANT:} prompt in our runs.}
\label{tab:tokenids}
\centering
\begin{tabular}{lll}
\toprule
Answer & Token ID & Decoded surface form \\
\midrule
yes & 3582 & \texttt{yes} \\
yes & 8241 & \texttt{Yes} \\
yes & 4874 & \texttt{`\,yes'} \\
yes & 3869\,$\star$ & \texttt{`\,Yes'} \\
\midrule
no  & 1217 & \texttt{no} \\
no  & 3782 & \texttt{No} \\
no  &  694 & \texttt{`\,no'} \\
no  & 1939\,$\star$ & \texttt{`\,No'} \\
\bottomrule
\end{tabular}
\end{table}

\section{Where the Mass Goes: Confusion Matrices and Convergence}
\label{app:mass}

Figure~\ref{fig:confusion} contrasts the full $2\times2$ confusion
matrices for adversarial POPE under both protocols. The eight-token matrix
is close to balanced (TP 1{,}174, TN 1{,}318, FP 182, FN 326). Switch to
two tokens and the false-positive count jumps from 182 to 842, as 660
true-negative \texttt{`\,No'} emissions flip to yes; the other 264 of the
924 flipped emissions land on borderline true-yes questions and become
true positives, which is why precision drops while recall climbs.

\begin{figure}[h]
\centering
\includegraphics[width=0.86\linewidth]{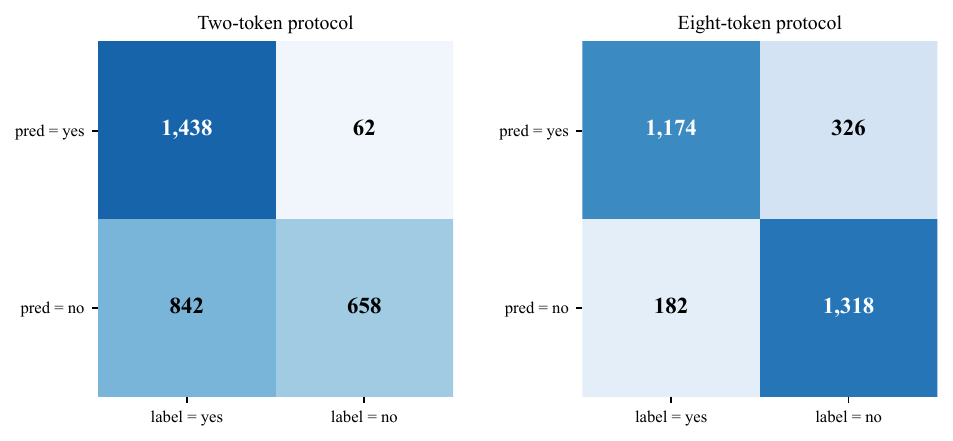}
\caption{Confusion matrices for POPE adversarial (3{,}000 questions) under
each protocol. Under two-token evaluation, 660 of the 1{,}318 true-No
predictions migrate to the FP cell, leaving precision at 0.6307. Under
eight-token evaluation only 182 stay there, and precision is 0.8658. The
model's behaviour is identical across both matrices; only the readout
changes.}
\label{fig:confusion}
\end{figure}

Figure~\ref{fig:convergence} tracks the running yes-rate of each protocol
over the 3{,}000-question adversarial stream. The eight-token line settles
at 0.452 by question 600 and holds, sitting close to the ground-truth
0.500; the two-token line settles at 0.760.
Figure~\ref{fig:allsplits-yesrate} repeats the comparison across all three
splits, and Figure~\ref{fig:toptokens} plots the first-token frequency for
all eight surface forms.

\begin{figure}[h]
\centering
\includegraphics[width=0.80\linewidth]{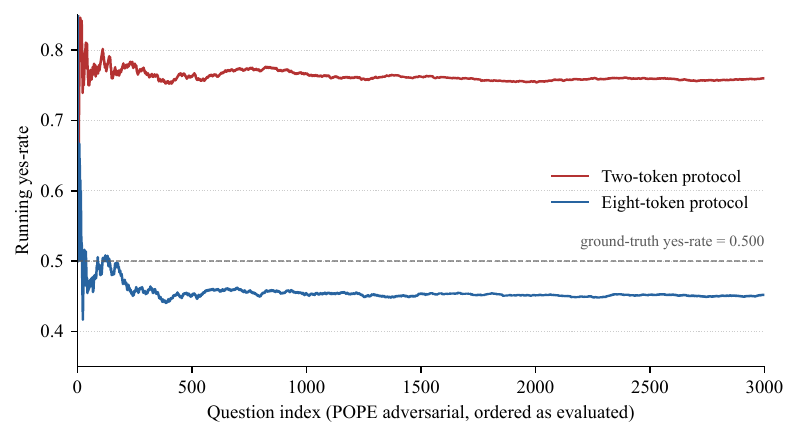}
\caption{Running yes-rate over the 3{,}000-question POPE adversarial
stream. The eight-token protocol (blue) stabilises near the ground-truth
yes-rate (dashed). The two-token protocol (red) stabilises around 0.760
because it reads 924 of the model's 1{,}644 \texttt{`\,No'} emissions
(56\%) as yes.}
\label{fig:convergence}
\end{figure}

\begin{figure}[h]
\centering
\includegraphics[width=0.92\linewidth]{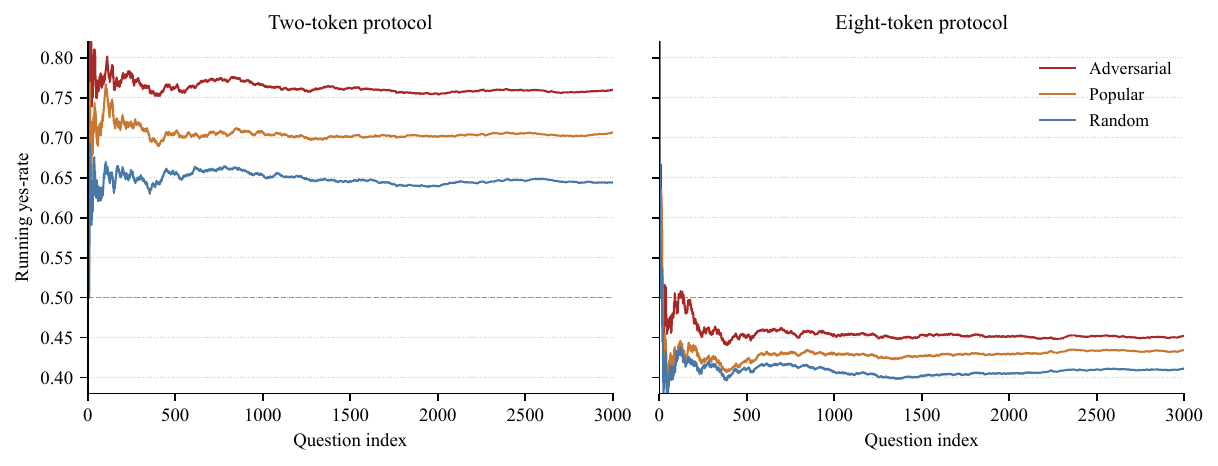}
\caption{Running yes-rate across all three POPE splits under both
protocols. The two-token protocol (left) holds a yes-bias on every split,
settling at 0.760, 0.707 and 0.644 on adversarial, popular and random;
the eight-token protocol (right) tracks the
underlying ground-truth mix.}
\label{fig:allsplits-yesrate}
\end{figure}

\begin{figure}[h]
\centering
\includegraphics[width=0.82\linewidth]{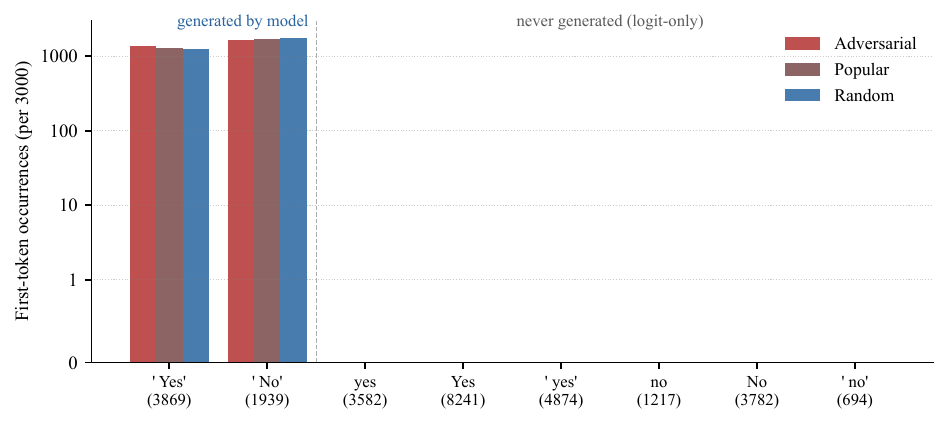}
\caption{First-token frequency at the answer position for all eight yes/no
surface forms across the three POPE splits (log scale on $y$). The four
right-side forms, including \texttt{yes} (3582) and \texttt{no} (1217),
receive zero first-token mass in any split. The two-token protocol reads
only 3582 and 1217.}
\label{fig:toptokens}
\end{figure}

Figure~\ref{fig:pope_examples} shows eight real adversarial samples, two
per confusion-matrix outcome under the eight-token rule.

\begin{figure}[h]
\centering
\setlength{\fboxsep}{0pt}\setlength{\fboxrule}{1.2pt}
\begin{tabular}{@{}c@{\hspace{4pt}}c@{\hspace{4pt}}c@{\hspace{4pt}}c@{}}
\fcolorbox{eightcolor!70}{white}{\includegraphics[width=0.22\linewidth, height=2.45cm, keepaspectratio]{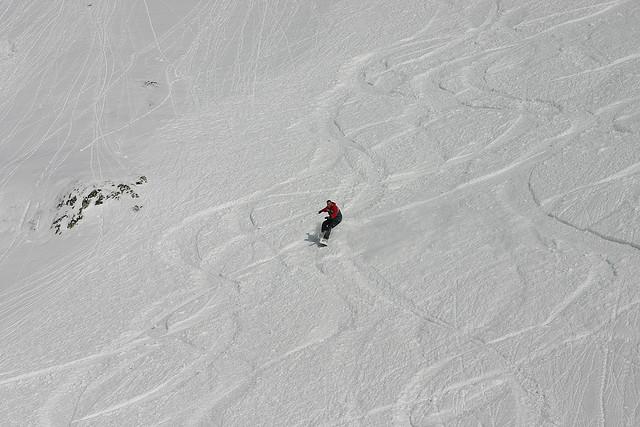}} &
\fcolorbox{eightcolor!70}{white}{\includegraphics[width=0.22\linewidth, height=2.45cm, keepaspectratio]{}} &
\fcolorbox{twocolor!70}{white}{\includegraphics[width=0.22\linewidth, height=2.45cm, keepaspectratio]{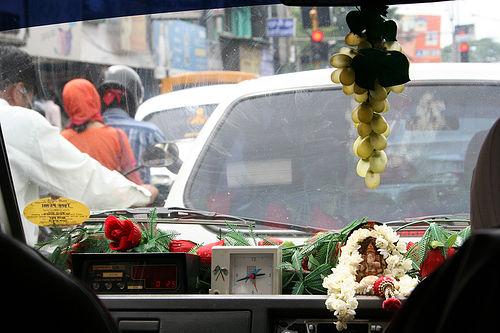}} &
\fcolorbox{twocolor!70}{white}{\includegraphics[width=0.22\linewidth, height=2.45cm, keepaspectratio]{}} \\[1pt]
\footnotesize\textcolor{eightcolor}{\textbf{TP}}\,\emph{snowboard?} &
\footnotesize\textcolor{eightcolor}{\textbf{TP}}\,\emph{truck?} &
\footnotesize\textcolor{twocolor}{\textbf{FP}}\,\emph{bus?} &
\footnotesize\textcolor{twocolor}{\textbf{FP}}\,\emph{traffic light?} \\[3pt]
\fcolorbox{eightcolor!70}{white}{\includegraphics[width=0.22\linewidth, height=2.45cm, keepaspectratio]{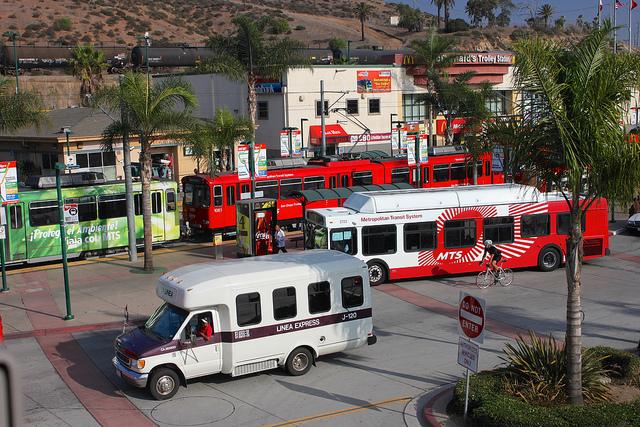}} &
\fcolorbox{eightcolor!70}{white}{\includegraphics[width=0.22\linewidth, height=2.45cm, keepaspectratio]{}} &
\fcolorbox{twocolor!70}{white}{\includegraphics[width=0.22\linewidth, height=2.45cm, keepaspectratio]{}} &
\fcolorbox{twocolor!70}{white}{\includegraphics[width=0.22\linewidth, height=2.45cm, keepaspectratio]{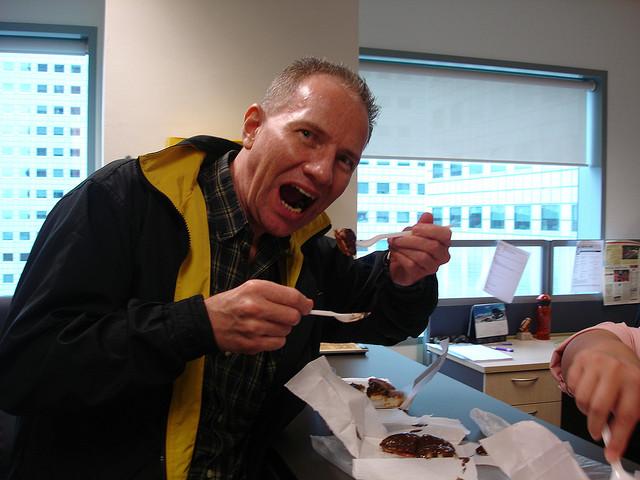}} \\[1pt]
\footnotesize\textcolor{eightcolor}{\textbf{TN}}\,\emph{dining table?} &
\footnotesize\textcolor{eightcolor}{\textbf{TN}}\,\emph{dining table?} &
\footnotesize\textcolor{twocolor}{\textbf{FN}}\,\emph{bed?} &
\footnotesize\textcolor{twocolor}{\textbf{FN}}\,\emph{spoon?} \\
\end{tabular}
\caption{Eight POPE adversarial samples from our run (question IDs 1, 7,
26, 72 on top; 14, 20, 35, 39 on the bottom), two per confusion-matrix
outcome under eight-token evaluation. \textcolor{eightcolor}{Blue}
borders mark predictions that match the label;
\textcolor{twocolor}{red} borders mark errors. The split queries objects
that co-occur with present ones, which is what makes the FP and FN cases
hard to fix with a generic visual-uncertainty signal.}
\label{fig:pope_examples}
\end{figure}

\section{Full Derivation of the Protocol Gap}
\label{app:math}

Table~\ref{tab:notation} fixes the notation.

\begin{table}[h]
\caption{Notation used in the derivation.}
\label{tab:notation}
\centering
\footnotesize
\begin{tabular}{lp{11.4cm}}
\toprule
Symbol & Meaning \\
\midrule
$P,\,N$           & ground-truth positive / negative question counts in the split \\
$T$               & true-yes greedy generations (token 3869 produced when label$=$yes) \\
$G$               & true-negative count: token 1939 produced when label$=$no (the $\mathrm{TN}$ cell under the eight-token rule) \\
$G_{\text{tot}}$  & total token-1939 emissions across the split, $G_{\text{tot}} = G + (P - T)$, because the model emits \texttt{`\,No'} on true-negative and false-negative questions alike \\
$\rho_N$          & fraction of true-negative \texttt{`\,No'} emissions the two-token rule flips to yes, counted per split \\
$\varphi$         & fraction of false-negative \texttt{`\,No'} emissions (label$=$yes) the two-token rule reads as yes, counted per split \\
$\ell_v$          & logit at vocabulary id $v$ (first decoded position) \\
$\hat{y}$         & predicted label after the readout rule is applied \\
\bottomrule
\end{tabular}
\end{table}

The eight-token rule preserves the model's generated decision, so
\begin{equation}
\mathrm{TP}_8 = T,\quad \mathrm{FP}_8 = N - G,\quad
\mathrm{FN}_8 = P - T,\quad \mathrm{TN}_8 = G,
\label{eq:counts-eight}
\end{equation}
where $G$ is the true-negative count, not the total token-1939 count
$G_{\text{tot}}$. With $G = 1{,}318$ and $N = 1{,}500$ on the adversarial
split, $\mathrm{FP}_8 = 182$, matching Figure~\ref{fig:confusion}.

The two-token rule reads two logits the decoder never selects, so it
does not preserve the decision. On a question where the model emits
\texttt{`\,Yes'} it could in principle answer no, but the records show
it never does: across the three splits all 3{,}893 \texttt{`\,Yes'}
emissions are also read as yes (the 8y$\to$2n column of
Table~\ref{tab:transitions} is zero). Given that observed condition the
rule changes only \texttt{`\,No'} emissions. It flips a fraction
$\rho_N$ of the true-negative ones into $\mathrm{FP}$ and a fraction
$\varphi$ of the false-negative ones (label$=$yes questions the model
had answered no) into $\mathrm{TP}$:
\begin{equation}
\mathrm{TP}_2 = T + \varphi(P - T),\quad
\mathrm{FP}_2 = (N - G) + \rho_N\,G,\quad
\mathrm{FN}_2 = (1-\varphi)(P - T),\quad
\mathrm{TN}_2 = (1-\rho_N)\,G.
\label{eq:counts-two}
\end{equation}
Substituting Eqs.~\ref{eq:counts-eight}--\ref{eq:counts-two} into
$\mathrm{F1} = 2\mathrm{TP}/(2\mathrm{TP} + \mathrm{FP} + \mathrm{FN})$
gives the closed form quoted as Eq.~\ref{eq:gap-closed} in the main
text. If an emitted \texttt{`\,Yes'} were ever read as no, a third rate
would enter $\mathrm{TP}_2$ and $\mathrm{FP}_2$; on these records that
rate is zero.

\paragraph{What the records contain.} Each of the 9{,}000 records holds
the question ID, the label, the eight-token and two-token decisions, and
the argmax token ID. The logits behind the decisions were compared in
memory and not saved, so the records fix the sign of
$\ell_{3582} - \ell_{1217}$ on every question but neither value. We
therefore cannot report means, medians or ranks for IDs 3582 and 1217.
A separate pass over the adversarial split, run with eager attention so
that it could return attention maps, saved the maximum logit within the
eight-token yes set and within the no set. Over its 3{,}000 questions
these maxima average 24.63 (yes set) and 24.53 (no set). Each is a
maximum over four IDs, so neither isolates 3582 or 1217. That pass
matches the main run's eight-token decision on 2{,}998 of 3{,}000
questions (it differs on IDs 77 and 2938, both fp16 ties or near-ties;
Appendix~\ref{app:why}), and every count below comes from the main run.

\paragraph{Per-split counts and rates.} Table~\ref{tab:transitions}
pairs the eight-token and two-token decisions question by question. In
the notation of Table~\ref{tab:notation}, $T$ is the 8y$\to$2y count on
the label-yes row, $N - G$ the same count on the label-no row, $G$ the
label-no 8n total and $P - T$ the label-yes 8n total.

\begin{table}[!ht]
\caption{Paired readout transitions for LLaVA-1.5-7B on each POPE split
(3{,}000 questions each, $P{=}N{=}1{,}500$). Each column gives the
eight-token decision followed by the two-token decision. The 8y$\to$2n
column is zero everywhere, so the two-token rule only changes
\texttt{`\,No'} emissions. Both rates are read from the 8n$\to$2y
column.}
\label{tab:transitions}
\centering
\footnotesize
\begin{tabular}{llrrrrcc}
\toprule
Split & Label & 8y$\to$2y & 8y$\to$2n & 8n$\to$2y & 8n$\to$2n & Rate & Value \\
\midrule
\multirow{2}{*}{Adversarial} & yes & 1{,}174 & 0 & 264 & 62      & $\varphi$ & $264/326 = 0.810$ \\
                             & no  & 182     & 0 & 660 & 658     & $\rho_N$  & $660/1{,}318 = 0.501$ \\
\addlinespace
\multirow{2}{*}{Popular}     & yes & 1{,}191 & 0 & 250 & 59      & $\varphi$ & $250/309 = 0.809$ \\
                             & no  & 112     & 0 & 567 & 821     & $\rho_N$  & $567/1{,}388 = 0.409$ \\
\addlinespace
\multirow{2}{*}{Random}      & yes & 1{,}191 & 0 & 250 & 59      & $\varphi$ & $250/309 = 0.809$ \\
                             & no  & 43      & 0 & 448 & 1{,}009 & $\rho_N$  & $448/1{,}457 = 0.307$ \\
\bottomrule
\end{tabular}
\end{table}

On POPE adversarial the direct token-ID counts and the two confusion
matrices fix every term: $T{=}1{,}174$, $G{=}1{,}318$,
$G_{\text{tot}}{=}1{,}644$, $P{=}N{=}1{,}500$, with
$\rho_N = 660/1{,}318 = 0.501$ and $\varphi = 264/326 = 0.810$. The 924
two-token flips split into 660 landing on true-negative questions
(becoming $\mathrm{FP}$) and 264 landing on borderline true-yes questions
(becoming $\mathrm{TP}$):
\begin{equation}
\mathrm{F1}_8 = \frac{2348}{2856} = 0.8221,
\qquad
\mathrm{F1}_2 = \frac{2876}{3780} = 0.7608.
\label{eq:f1-vals}
\end{equation}
The two-rate closed form reproduces all four two-token cells exactly
($\mathrm{TP}_2{=}1{,}438$, $\mathrm{FP}_2{=}842$, $\mathrm{FN}_2{=}62$,
$\mathrm{TN}_2{=}658$), so $\mathrm{F1}_2 = 0.7608$ comes out without any
empirical correction term. Set $\varphi = 0$ and the lower bound is
$\mathrm{F1}_2 = 2348/3516 = 0.6678$; the observed 0.7608 is higher
because $\varphi = 0.810$. The same substitution with each split's own
rates gives
\begin{equation}
\begin{aligned}
\mathrm{F1}_2^{\text{pop}}  &= \frac{2\cdot 1{,}441}{2\cdot 1{,}441 + 679 + 59}
  = \frac{2882}{3620} = 0.7961,\\
\mathrm{F1}_2^{\text{rand}} &= \frac{2\cdot 1{,}441}{2\cdot 1{,}441 + 491 + 59}
  = \frac{2882}{3432} = 0.8397,
\end{aligned}
\label{eq:f1-vals-other}
\end{equation}
against eight-token values of $2382/2803 = 0.8498$ and
$2382/2734 = 0.8713$, matching Table~\ref{tab:protocol}. Applying the
adversarial rates ($\rho_N = 0.501$, $\varphi = 0.810$) to the popular
and random counts instead gives 0.7690 and 0.7761, so the rates have to
be counted per split. All of this is an accounting identity. The rates
are read off the same transitions they are then used to rebuild, so the
reconstruction checks the bookkeeping and carries no independent
evidence about why the two logits fall the way they do.

\paragraph{Pooled flip rate.} The fraction of all \texttt{`\,No'}
emissions the two-token rule reads as yes is $924/1{,}644 = 0.562$ on
adversarial, $817/1{,}697 = 0.481$ on popular and $698/1{,}766 = 0.395$
on random. Each pooled rate is the mix
$\big(\rho_N G + \varphi(P - T)\big)/\big(G + P - T\big)$ of the two
group rates. Since $\varphi$ stays near 0.81, the decline across splits
comes from $\rho_N$. A comparison that ignored the input would flip true
negatives at the same expected rate on every split. Instead $\rho_N$ is
highest on adversarial, whose negatives name objects that co-occur with
present ones, and lowest on random. Together with $\varphi > \rho_N$ on
every split, this pattern is consistent with $\ell_{3582} - \ell_{1217}$
carrying the model's lean toward yes. Without the individual logits we
cannot measure how strong that dependence is.

\section{Formal Equivalence of String-Parse and Single-Token Readouts}
\label{app:equiv}

The observation that \emph{dynamic\_single} and \emph{string\_parse} agree
to four decimals on every run is not a coincidence. Under greedy decoding
the two protocols are identical whenever the argmax token belongs to the
single-token surface-form set, outside the tie case of
Theorem~\ref{thm:equiv}. Writing the proof down turns an empirical
regularity into a claim the rest of the paper can lean on.

\begin{theorem}[Single-token readout equivalence]
\label{thm:equiv}
Let $\mathcal{V}$ be a model's vocabulary and let
$Y \subset \mathcal{V}$, $N \subset \mathcal{V}$ be disjoint single-token
surface-form sets for the yes and no answers. Let
$\boldsymbol{\ell} \in \mathbb{R}^{|\mathcal{V}|}$ be the first-position
logit vector on a fixed input, and $t^\star = \arg\max_v \ell_v$ the
greedy argmax token. Define
\[
\hat{y}_{\mathrm{logit}}
   = \mathbf{1}\!\left[\max_{i \in Y} \ell_i > \max_{j \in N} \ell_j\right],
\qquad
\hat{y}_{\mathrm{str}}
   = \mathbf{1}[\,\textsf{decode}(t^\star) \in \textsf{yes-strings}\,].
\]
If $t^\star \in Y \cup N$ then
$\hat{y}_{\mathrm{logit}} = \hat{y}_{\mathrm{str}}$, unless the maximum
is tied between $Y$ and $N$ with its lowest tied ID in $Y$.
\end{theorem}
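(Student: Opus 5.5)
The plan is a direct case analysis on where the greedy argmax $t^\star$ falls. First I pin down the conventions the statement leaves implicit. The string readout is taken to satisfy $\textsf{decode}(t) \in \textsf{yes-strings}$ iff $t \in Y$ for every $t \in Y \cup N$. This holds because $Y$ and $N$ are by definition the single-token surface forms of yes and no, and they are disjoint. Ties in $\arg\max_v \ell_v$ are broken toward the lowest vocabulary ID, the convention of \texttt{torch.argmax}, which is what makes the exception clause meaningful. With these fixed, $\hat{y}_{\mathrm{str}} = \mathbf{1}[t^\star \in Y]$ whenever $t^\star \in Y \cup N$, so it suffices to show $\mathbf{1}[t^\star \in Y] = \mathbf{1}[\max_Y \ell > \max_N \ell]$ outside the stated tie case.

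The key observation is that if $t^\star \in Y \cup N$ then $\ell_{t^\star} = \max_v \ell_v \ge \max_{Y \cup N} \ell$. Hence $\ell_{t^\star} = \max(\max_Y \ell, \max_N \ell) =: M$. I then split into three cases.

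\textbf{Case 1:} $\max_Y \ell > \max_N \ell$. Then $M$ is attained only inside $Y$ among $Y \cup N$. Since $t^\star \in Y \cup N$ attains $M$, it must lie in $Y$, and both readouts give yes.

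\textbf{Case 2:} $\max_Y \ell < \max_N \ell$. Symmetrically $t^\star \in N$, and both readouts give no.

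\textbf{Case 3:} $\max_Y \ell = \max_N \ell$. Here $\hat{y}_{\mathrm{logit}} = 0$ because of the strict inequality. Meanwhile $t^\star$ is the lowest-ID token achieving $M$. If that token lies in $N$, then $\hat{y}_{\mathrm{str}} = 0$ and the readouts agree. If it lies in $Y$, then $\hat{y}_{\mathrm{str}} = 1 \ne 0$; this is exactly the excluded configuration, namely a maximum tied between $Y$ and $N$ whose lowest tied ID is in $Y$.

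The main obstacle is not the inequalities, which are immediate. It is making the conventions precise enough that the exception clause is exactly right. I would state the lowest-ID tie-breaking rule for $\arg\max$ explicitly. I would also note that tokens outside $Y \cup N$ tying at $M$ cannot matter, because $t^\star \in Y \cup N$ is assumed. Finally, I would remark that the hypothesis $t^\star \in Y \cup N$ is essential. If the argmax were some other token, as with the InstructBLIP parse in Section~\ref{sec:crossmodel}, $\hat{y}_{\mathrm{logit}}$ is still defined but the string readout reads something else, so no equivalence is claimed there.
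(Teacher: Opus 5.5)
Your proof is correct and is essentially the paper's argument. The paper splits on whether $t^\star \in Y$ or $t^\star \in N$ rather than on the sign of $\max_{i \in Y} \ell_i - \max_{j \in N} \ell_j$, but both arguments rest on the same observation that $\ell_{t^\star}$ equals the maximum over $Y \cup N$, and both use the same lowest-ID tie-breaking rule for greedy decoding to isolate the exception, which you usefully state explicitly along with the decode convention.
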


\begin{proof}
Greedy decoding emits $t^\star$ as the first token, and \textsf{decode}
returns the surface form whose vocabulary id is $t^\star$. By construction
$Y$ holds exactly the single-token yes forms and $N$ the single-token no
forms. Suppose $t^\star \in Y$. Then $\ell_{t^\star} \geq \ell_v$ for
every $v \in \mathcal{V}$, so $\ell_{t^\star} \geq \max_{j \in N} \ell_j$,
and $\ell_{t^\star} = \max_{i \in Y} \ell_i$ since $t^\star \in Y$. The
strict inequality $\max_Y \ell > \max_N \ell$ holds unless the maximum is
tied between $Y$ and $N$, in which case greedy decoding returns the
lowest tied ID, here $t^\star \in Y$, while the strict rule returns no;
this is the exception in the statement. For the two IDs LLaVA-1.5 emits,
1939 precedes 3869, so a tie between them gives $t^\star = 1939 \in N$,
the case below. Exact ties do occur, because the logits are fp16 values: in
the instrumented adversarial pass (Appendix~\ref{app:why}) the yes-set
and no-set maxima are equal on 8 of the 3{,}000 questions, while the
main sweep's records hold no logits, so ties there cannot be counted.
Outside a tie between $Y$ and $N$, therefore, $\hat{y}_{\mathrm{logit}} = 1$
and $\hat{y}_{\mathrm{str}} = 1$. The case $t^\star \in N$ needs no
exception: then $\max_N \ell = \ell_{t^\star} \geq \max_Y \ell$, so the
strict rule returns no even on a tie, and both readouts give 0.
\end{proof}

\begin{corollary}[Failure mode of the two-token readout]
\label{cor:fail}
Let $Y_2 \subset Y$ and $N_2 \subset N$ be nonempty strict subsets. On a
question whose greedy argmax satisfies
$t^\star \in (Y \cup N) \setminus (Y_2 \cup N_2)$, the readout
$\hat{y}_{\mathrm{logit}}(Y_2, N_2)$ compares two candidates the decoder
did not select. Greedy decoding does not fix which of the two is larger,
so the readout's decision is not tied to the emitted answer: for every
such $t^\star$ there are logit vectors with argmax $t^\star$ on which
$\hat{y}_{\mathrm{logit}}(Y_2, N_2) = \hat{y}_{\mathrm{str}}$, and
others on which the two differ.
\end{corollary}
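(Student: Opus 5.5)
The corollary has two parts, and I would prove them separately. The first is a structural remark: when $t^\star \notin Y_2 \cup N_2$, neither compared candidate is the decoder's choice. The second is an existence claim: there are logit vectors agreeing with the string readout and others disagreeing with it. Both are explicit constructions on $\mathbb{R}^{|\mathcal{V}|}$. By Theorem~\ref{thm:equiv}, $\hat{y}_{\mathrm{str}}$ equals $\mathbf{1}[t^\star \in Y]$, outside the tie exception. I would therefore fix $t^\star$ and compare $\hat{y}_{\mathrm{logit}}(Y_2,N_2)$ with that value.

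The first step is to record what the greedy condition constrains. Argmax $t^\star$ only requires $\ell_{t^\star} > \ell_v$ for all $v \neq t^\star$; taking strict inequality avoids the tie exception. It says nothing about the relative order of $\max_{Y_2}\ell$ and $\max_{N_2}\ell$. Because $t^\star \notin Y_2 \cup N_2$, neither maximum is attained at $t^\star$, so both are strictly below $\ell_{t^\star}$.

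The second step is the pair of constructions. Set $\ell_{t^\star} = 10$ and $\ell_v = 0$ for every $v \notin Y_2 \cup N_2 \cup \{t^\star\}$. For vector A, set $\ell_i = 2$ for $i \in Y_2$ and $\ell_j = 1$ for $j \in N_2$. For vector B, swap these, setting $\ell_i = 1$ on $Y_2$ and $\ell_j = 2$ on $N_2$. Both vectors have unique argmax $t^\star$. Vector A gives $\hat{y}_{\mathrm{logit}}(Y_2,N_2) = 1$ and vector B gives $0$. Nonemptiness of $Y_2$ and $N_2$ is used here, since it makes both maxima well defined and finite. Disjointness of $Y$ and $N$ ensures the assignments on $Y_2$ and $N_2$ do not conflict.

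Since $\hat{y}_{\mathrm{str}}$ is the same fixed bit ($1$ if $t^\star \in Y$, $0$ if $t^\star \in N$) on both vectors, exactly one of A and B agrees with it and the other disagrees. Hence the two-token decision is not a function of the emitted answer.

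The main obstacle is the tie and exception bookkeeping rather than the construction itself. Choosing strictly separated values keeps every vector outside the exception clause of Theorem~\ref{thm:equiv}, so that $\hat{y}_{\mathrm{str}} = \mathbf{1}[t^\star \in Y]$ holds without caveat. The proof should also note that the hypothesis "nonempty strict subsets" guarantees that $(Y \cup N) \setminus (Y_2 \cup N_2)$ is nonempty, so the statement is not vacuous. As a final remark I would connect the result to Table~\ref{tab:transitions}: LLaVA-1.5 realises both regimes, with the 8n$\to$2y and 8n$\to$2n cells both nonzero.
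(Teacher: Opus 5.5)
Your proof is correct and matches the paper's. Both arguments build two logit vectors with unique argmax $t^\star$ that differ only by swapping the order of the $Y_2$ and $N_2$ entries, then note that $\hat{y}_{\mathrm{str}}$ depends on $t^\star$ alone, so exactly one vector agrees with it; the paper perturbs one representative $a\in Y_2$, $b\in N_2$ rather than all of $Y_2$ and $N_2$, which is immaterial, and its claim that $\hat{y}_{\mathrm{str}} = \mathbf{1}[t^\star \in Y]$ for $t^\star \in Y\cup N$ follows directly from the definition of $\hat{y}_{\mathrm{str}}$, so you need neither Theorem~\ref{thm:equiv} nor its tie caveat for that step.
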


\begin{proof}
If $t^\star \in Y_2 \cup N_2$, both readouts agree by
Theorem~\ref{thm:equiv} applied to $(Y_2, N_2)$, outside the tie case it
excludes. Now let
$t^\star \in (Y \cup N) \setminus (Y_2 \cup N_2)$. The argmax condition
requires only $\ell_{t^\star} \geq \ell_v$ for every $v \in \mathcal{V}$; it
places no constraint on the order of the logits inside
$Y_2 \cup N_2$. Pick $a \in Y_2$ and $b \in N_2$ (disjoint, because $Y$
and $N$ are) and set $\ell_{t^\star} = 0$, $\ell_a = -1$, $\ell_b = -2$
and $\ell_v = -3$ for every other $v$. Then $t^\star$ is the argmax and
$\hat{y}_{\mathrm{logit}}(Y_2, N_2) = 1$. Swapping the values at $a$ and
$b$ leaves $t^\star$ the argmax and gives
$\hat{y}_{\mathrm{logit}}(Y_2, N_2) = 0$. Since $\hat{y}_{\mathrm{str}}$
depends on $t^\star$ alone, it equals the readout in exactly one of the
two cases.
\end{proof}

The corollary says nothing about how large the excluded logits are or
how often the readout agrees in practice; agreement is an empirical
property of the model and template. The LLaVA-1.5 audit sits entirely in
the second case. The generated token is always 3869 or 1939, and the
two-token readout uses $Y_2 = \{3582\}$, $N_2 = \{1217\}$, so
$(Y \cup N) \setminus (Y_2 \cup N_2)$ covers all 9{,}000 questions and
Theorem~\ref{thm:equiv} guarantees nothing about the two-token decision
on any of them. On the records the readout agrees with the emitted
answer on 2{,}076, 2{,}183 and 2{,}302 of the 3{,}000 questions in the
adversarial, popular and random splits (0.692, 0.728, 0.767). The
agreement is lopsided. It is complete on the 3{,}893 \texttt{`\,Yes'}
emissions and partial on \texttt{`\,No'} emissions, where
$\ell_{3582} > \ell_{1217}$ on 0.81 of the label-yes questions and on
0.307 to 0.501 of the label-no ones (Table~\ref{tab:transitions}). That
asymmetry is what inflates the two-token yes-rate to 0.760 on
adversarial, and it indicates that the excluded logits may track the
model's lean. Across the multi-model runs the
\emph{dynamic\_single} and \emph{string\_parse} readouts report identical
F1, precision, recall, and yes-rate to four decimals on
LLaVA-1.6-Mistral-7B, mPLUG-Owl2, and Qwen2-VL. On InstructBLIP they
disagree, and the disagreement does not contradict
Theorem~\ref{thm:equiv}: the theorem equates the logit readout with
parsing $\textsf{decode}(t^\star)$, a single token, whereas string parsing
here scanned up to six generated tokens of free text whose leading span
did not begin with the answer token.

\section{The Eight-Token Standard}
\label{app:standard}

For LLaMA-family VLMs evaluated on POPE under the
\texttt{USER:\,<image>\textbackslash n\{question\}\textbackslash nASSISTANT:}
template, the readout rule is
\begin{equation}
Y = \{3582, 8241, 4874, 3869\},\quad
N = \{1217, 3782, 694, 1939\},\quad
\hat{y} = \mathbf{1}\!\left[\max_{i \in Y} \ell_i > \max_{j \in N} \ell_j\right].
\label{eq:rule}
\end{equation}
Algorithm~\ref{alg:eight-token} states the same rule in pseudocode and
folds in the diagnostic pass we run for any new model-template pairing.
The diagnostic on lines 2--9 is precisely what we wish a prior evaluation
had run on LLaVA-1.5-7B. Token-set choice is context-dependent: a
different model or template may push first-token mass onto other forms, so
the diagnostic, not the fixed eight-token set, is what ports.

\begin{algorithm}[h]
\caption{Eight-token POPE readout with new-model diagnostic.}
\label{alg:eight-token}
\begin{algorithmic}[1]
\REQUIRE Model $\mathcal{M}$, processor, prompt template $\mathcal{T}$, POPE split $\mathcal{D}$.
\REQUIRE Token sets $Y = \{3582, 8241, 4874, 3869\}$, $N = \{1217, 3782, 694, 1939\}$.
\STATE \textbf{Diagnostic (once per model-template combination):}
\STATE Sample 100 questions $\{q_k\}$ from $\mathcal{D}$.
\FOR{each $q_k$}
  \STATE $\boldsymbol{\ell}_k \gets \mathcal{M}(\mathcal{T}(q_k))$ \quad // first-token logits, \texttt{max\_new\_tokens=1}
  \STATE $t_k \gets \arg\max_v \boldsymbol{\ell}_k[v]$
\ENDFOR
\IF{$\{t_k\}_k \not\subseteq Y \cup N$}
  \STATE Add missing IDs to $Y$ or $N$ before continuing.
\ENDIF
\STATE \textbf{Evaluation:}
\FOR{each question $q \in \mathcal{D}$}
  \STATE $\boldsymbol{\ell} \gets \mathcal{M}(\mathcal{T}(q))$
  \STATE $\hat{y} \gets \mathbf{1}\!\left[\max_{i \in Y} \boldsymbol{\ell}[i] > \max_{j \in N} \boldsymbol{\ell}[j]\right]$
\ENDFOR
\STATE \textbf{return} F1, precision, recall, yes-rate.
\end{algorithmic}
\end{algorithm}

\section{Inference-Time Correction Experiments}
\label{app:correction}

With the baseline pinned at 0.8221, we asked a simple question: can any
inference-time signal push past it on the adversarial split? Over roughly
90 GPU-hours we ran nine variants, six grounding-based and three
contrastive. None improved adversarial F1. Section~\ref{app:why} below
reports the per-question diagnostics that explain why.

\subsection{Methods tested}

All nine variants act on the answer-position logits, before the
eight-token decision rule. Six suppress yes-logits when an external
visual-uncertainty signal is low; one applies uniform suppression as a
control; two use contrastive image variants. Where relevant we sweep the
suppression strength $\beta$.

\begin{itemize}[leftmargin=*,nosep]
\item \textbf{CLIP-B/32 certainty gating.} Cosine similarity between
CLIP-ViT-B/32~\citep{radford2021learning} patch features and the
question-noun embedding gives a certainty in $[0,1]$, used to scale
yes-suppression as
$\ell_{\text{yes}}^{\text{mod}} = \ell_{\text{yes}} - \beta(1 - c)$, with
$\beta \in \{1.0, 1.5\}$.
\item \textbf{CLIP-L/14-336 certainty gating.} Same procedure with the
larger CLIP-L, using patch features through its visual projection head, so
the embedding space matches LLaVA-1.5's own vision tower.
\item \textbf{Entropy gating.} Answer-position token-distribution entropy,
normalised to $[0,1]$; higher entropy means lower certainty and stronger
suppression.
\item \textbf{Magnitude gating.} The relative magnitude of the yes-token
logit in the full vocabulary distribution, used as a soft suppression
threshold.
\item \textbf{CLS feature similarity.} Cosine similarity between the
CLIP-ViT-L/14-336 CLS embedding and the question-noun text embedding.
\item \textbf{Object-noun CLIP similarity.} Full-image CLIP-B/32 embedding
against the question-noun embedding, with rule-based noun extraction from
the question template.
\item \textbf{Uniform yes-suppression.} A control that applies a fixed
suppression to every question, to quantify how much movement is
signal-driven versus a static yes-bias correction.
\item \textbf{Generation midpoint gating.} The yes/no logit gap read at an
intermediate generation step rather than the final position.
\item \textbf{VCD-noise perturbation.} Following
\citet{leng2024mitigating}, a second forward pass with Gaussian noise on
the image embedding gives a contrastive logit set; the decision is taken
on the difference.
\end{itemize}

\subsection{Results}

Table~\ref{tab:correction} reports F1, precision, and recall for all nine.
Figure~\ref{fig:correction_methods} sorts them by F1 and places them on
the precision-recall plane. Every yes-suppression variant buys 1.5 to 6.0
precision points at a cost of 3.0 to 5.8 recall points, and in no case is
the precision gain enough to cover the recall loss.

\begin{table}[h]
\caption{Nine inference-time corrections on POPE adversarial (3{,}000
questions, eight-token rule). The closest, CLIP-B/32 $\beta{=}1.0$, sits
0.005 F1 below baseline.}
\label{tab:correction}
\centering
\begin{tabular}{lccc}
\toprule
Method & F1 & Precision & Recall \\
\midrule
Eight-token baseline & 0.8221 & 0.8658 & 0.7827 \\
\midrule
CLIP-B/32 certainty ($\beta = 1.0$) & 0.8171 & 0.9072 & 0.7433 \\
CLIP-B/32 certainty ($\beta = 1.5$) & 0.8104 & 0.9278 & 0.7193 \\
CLIP-L/14-336 certainty             & 0.8159 & 0.9031 & 0.7453 \\
Entropy gating                      & 0.8164 & 0.9023 & 0.7453 \\
Magnitude gating                    & 0.8148 & 0.8981 & 0.7467 \\
CLS feature similarity              & 0.8153 & 0.8963 & 0.7460 \\
Object-noun CLIP                    & 0.8166 & 0.9017 & 0.7453 \\
Uniform suppression                 & 0.8110 & 0.9221 & 0.7247 \\
VCD-noise perturbation              & 0.8169 & 0.9009 & 0.7440 \\
\bottomrule
\end{tabular}
\end{table}

\begin{figure}[h]
\centering
\begin{subfigure}[t]{0.46\linewidth}
\centering
\includegraphics[width=\linewidth]{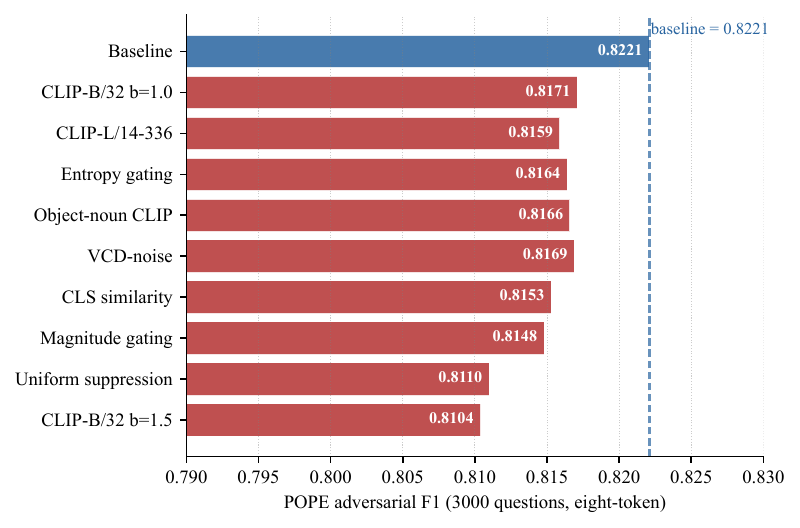}
\caption{F1 ordered against baseline.}
\label{fig:correction_methods_bar}
\end{subfigure}
\hfill
\begin{subfigure}[t]{0.50\linewidth}
\centering
\includegraphics[width=\linewidth]{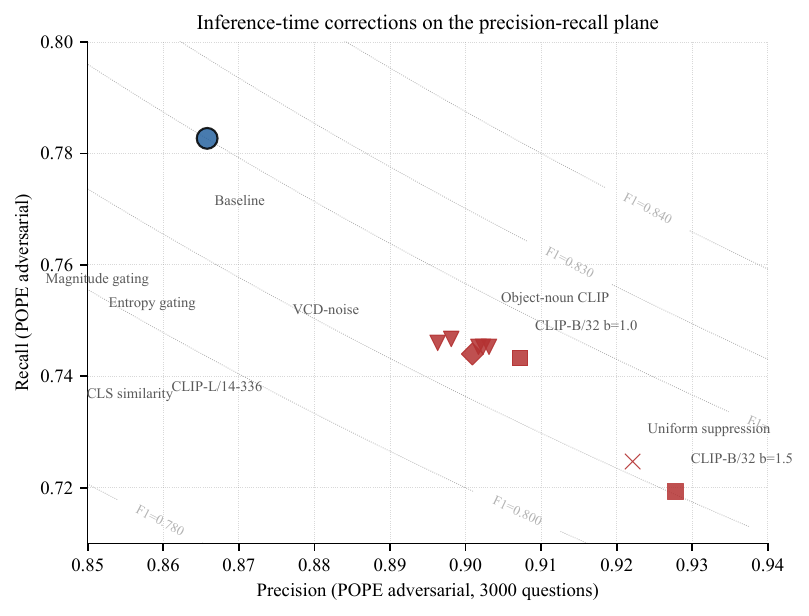}
\caption{Precision--recall plane.}
\label{fig:correction_methods_pr}
\end{subfigure}
\caption{Nine inference-time corrections on POPE adversarial, two views.
\textbf{(a)} F1 against the corrected baseline at 0.8221. \textbf{(b)} The
same methods on the precision-recall plane with F1 iso-curves. Every
yes-suppression variant moves north and west of the baseline; no path
crosses the $\mathrm{F1}{=}0.8221$ iso-curve.}
\label{fig:correction_methods}
\end{figure}

\subsection{Why the adversarial split resists correction}
\label{app:why}

To see the failure mode, we ran the full 3{,}000-question adversarial
diagnostic, instrumenting every signal that featured in the nine methods
and binning each question by its TP/FP/TN/FN outcome. The baseline splits
into 1{,}173 TPs, 1{,}317 TNs, 183 FPs, and 327 FNs (2{,}490 correct,
510 wrong). This diagnostic was a separate instrumented run, and it
differs from the main sweep (1{,}174 / 1{,}318 / 182 / 326) on 2 of the
3{,}000 questions, both fp16 ties or near-ties: on ID 77 (label yes) the
yes-set and no-set maxima are both 24.6875 and the diagnostic's
strict-inequality rule answers no, and on ID 2938 (label no) the yes-set
maximum leads by 0.0156, one fp16 step at this magnitude. Three signals
carry most of the weight.

\paragraph{CLIP-L similarity separates TPs from FPs by a small but
reliable margin.} We took the maximum cosine similarity between the
question-noun text embedding and CLIP-L's 576 patch features for every
question. The TP group ($n{=}1{,}173$) has mean 0.1507 and the FP group
($n{=}183$) has mean 0.1375, a gap of 0.0132 (95\% bootstrap CI
$[0.0084, 0.0180]$, 10{,}000 resamples, percentile method, NumPy
\texttt{default\_rng} seed 42; Mann--Whitney $U=132923$,
$p=2.05\times10^{-7}$; Cohen's $d=0.40$). Reliable but small. A gate keyed
on this signal would have to separate TPs from FPs at $d=0.40$ while the
precision-recall tradeoff punishes any threshold that suppresses FPs at
the expense of TPs. The pattern fits a structural property of the split:
adversarial negatives co-occur in the same image as the queried object,
CLIP was trained on the same co-occurrence statistics, and its embedding
space encodes them.

\paragraph{Decoder attention points the wrong way.} Mean attention from
the final text token to the 576 image positions, averaged across layers
14--20, is 0.0738 for correct predictions ($n{=}2{,}490$) and 0.0801 for
wrong ones ($n{=}510$); Mann--Whitney $U=452635$, two-sided
$p=1.45\times10^{-24}$, Cohen's $d=-0.46$. The layers-14--20 window was
fixed before analysis as the mid-to-late band of the 32-layer LLaMA-2
decoder, not tuned to maximise the effect. The direction holds per
category: FPs show higher mean attention (0.0833) than TPs (0.0780;
$p=4.13\times10^{-9}$), and FNs higher (0.0783) than TNs (0.0700;
$p=1.50\times10^{-23}$). Wrong predictions consistently draw more image
attention than correct ones (Figure~\ref{fig:grounding}). A plausible
reading is that uncertain predictions trigger more looking as the model
hunts for evidence, which runs against the ``more grounding means more
correct'' assumption the six grounding gates rely on.

\begin{figure}[h]
\centering
\includegraphics[width=0.72\linewidth]{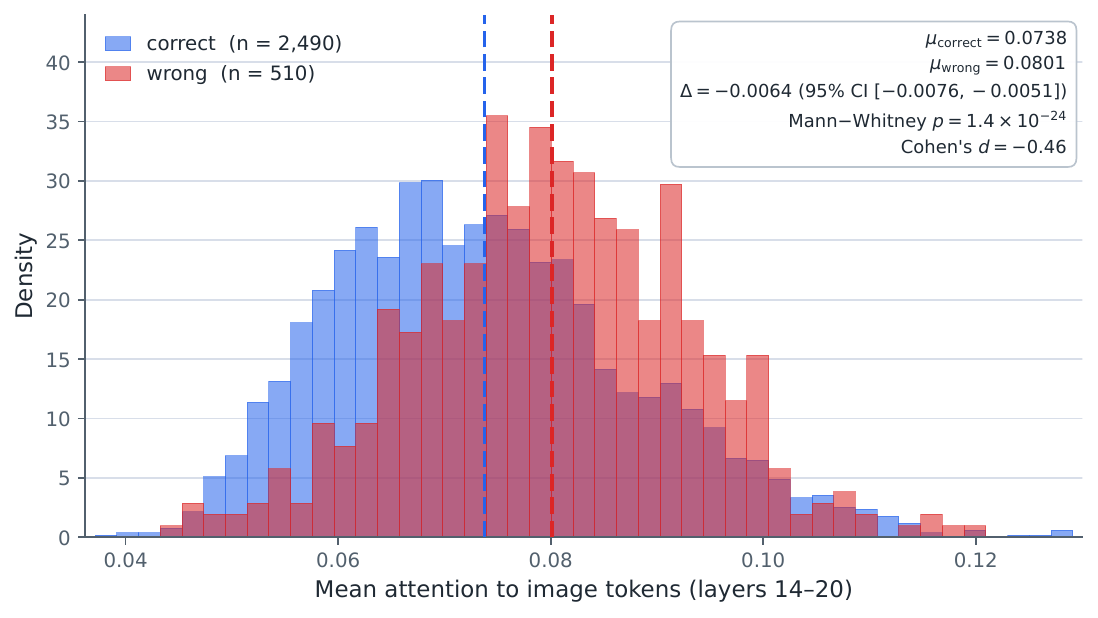}
\caption{Distribution of mean attention from the final text token to image
positions, layers 14--20, on the 3{,}000-question adversarial run. Blue
bars are correct ($n{=}2490$); red bars are wrong ($n{=}510$). Higher
image attention associates with wrong predictions, so suppressing
yes-logits when attention is low removes signal from the wrong side of the
distribution.}
\label{fig:grounding}
\end{figure}

\paragraph{The logit gap separates by confidence, not correctness.}
Figure~\ref{fig:signals} gives the mean absolute logit gap
$|\ell_{\text{yes}} - \ell_{\text{no}}|$ per category: 2.066 for TPs,
1.482 for TNs, 0.700 for FPs, and 0.962 for FNs. It separates correct from
wrong with Cohen's $d=1.06$ across all 3{,}000 questions
($p=1.22\times10^{-93}$). Within the positive class TPs beat FNs
($d=1.22$, $p=4.83\times10^{-68}$); within the negative class TNs beat FPs
($d=1.18$, $p=1.19\times10^{-41}$). The model is most confident when
right and least confident when wrong. Suppression acts on small-gap
questions in both directions at once: a small-gap TP gets pushed to FN at
roughly the rate a small-gap FP gets pushed to TN, so the net F1
contribution is near zero, and in our runs the sign came out negative
every time. The $p$-values here are uncorrected for multiple comparisons;
given the effect magnitudes, a Bonferroni correction over the six tests
changes no qualitative conclusion.

\begin{figure}[h]
\centering
\includegraphics[width=0.92\linewidth]{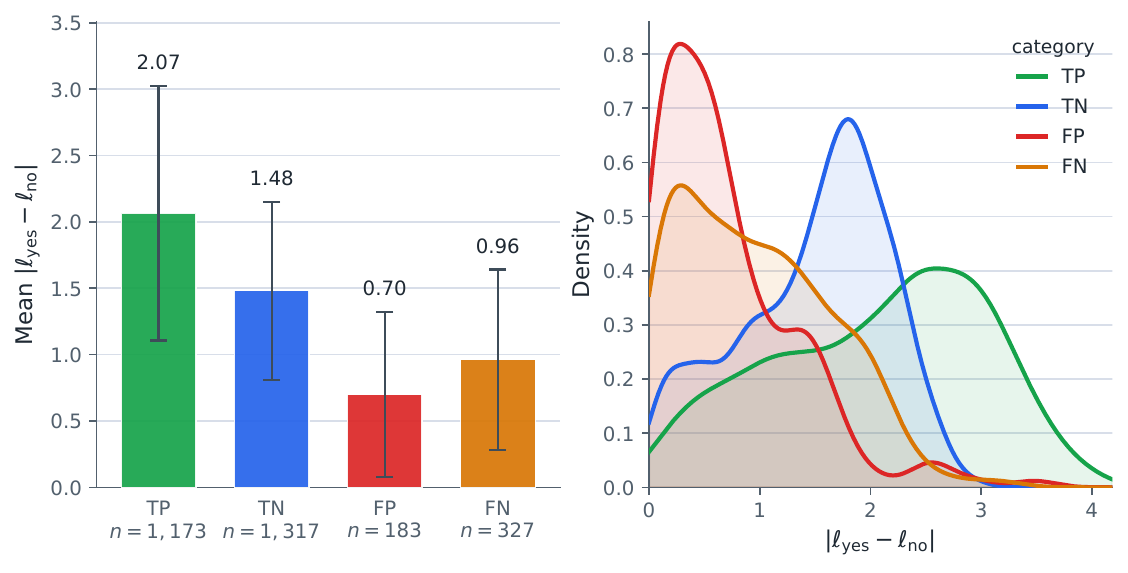}
\caption{Absolute logit gap $|\ell_{\text{yes}} - \ell_{\text{no}}|$ at
the answer position on the 3{,}000-question adversarial run. Left:
per-category mean with one-standard-deviation error bars (TP $n{=}1173$,
TN $n{=}1317$, FP $n{=}183$, FN $n{=}327$; means 2.07, 1.48, 0.70, 0.96).
Right: kernel-density estimates. The order TP $>$ TN $>$ FN $>$ FP at the
mean separates confidence from correctness, which is why uncertainty-keyed
corrections trade TPs for FNs at about the same rate as FPs for TNs.}
\label{fig:signals}
\end{figure}

The adversarial split is behaving as designed. Co-occurring negatives are
hard to refute with image evidence, because the evidence ambiguously
supports both the queried object and the one actually present. The methods
we tested assume a separable correctness signal the split is built to
deny. On the easier splits we would expect the same suppression to look
better, since those negatives are easier to rule out visually. Beating the
adversarial baseline at the decoding step would need a signal the dataset
does not undercut, and we do not have a candidate.

Group-level means and standard deviations behind these effect sizes are in
Table~\ref{tab:groupsd}. The pooled attention SD is 0.0140, giving
$d = (0.0738 - 0.0801)/0.0140 = -0.45$ (reported as $-0.46$ at the
precision used above); the correct-versus-wrong logit-gap pooled SD is
0.839, giving $d=1.06$.

\begin{table}[h]
\caption{Per-group means and standard deviations underlying the
effect sizes. SD is the sample standard deviation.}
\label{tab:groupsd}
\centering
\begin{tabular}{llrrr}
\toprule
Signal & Group & $n$ & Mean & SD \\
\midrule
Attention & correct & 2{,}490 & 0.0738 & 0.0142 \\
Attention & wrong   &   510 & 0.0801 & 0.0128 \\
\midrule
Logit gap & TP & 1{,}173 & 2.066 & 0.959 \\
Logit gap & TN & 1{,}317 & 1.482 & 0.670 \\
Logit gap & FP &   183 & 0.700 & 0.621 \\
Logit gap & FN &   327 & 0.962 & 0.681 \\
\bottomrule
\end{tabular}
\end{table}

\section{Multi-Model Detail}
\label{app:crossmodel}

\subsection{Token-set divergence: the LLaVA-1.6-Mistral map}

To expose the mechanism behind the \emph{legacy\_8tok} $=0$ result, we
zoom in on LLaVA-1.6-Mistral, whose Mistral vocabulary makes the LLaMA-2
IDs point at the wrong surface forms.
Figure~\ref{fig:crossmodel-tokens} shows the difference at a glance and
Table~\ref{tab:crossmodel-tokens} lists the tokenizer-derived IDs.

\begin{figure}[h]
\centering
\includegraphics[width=0.92\linewidth]{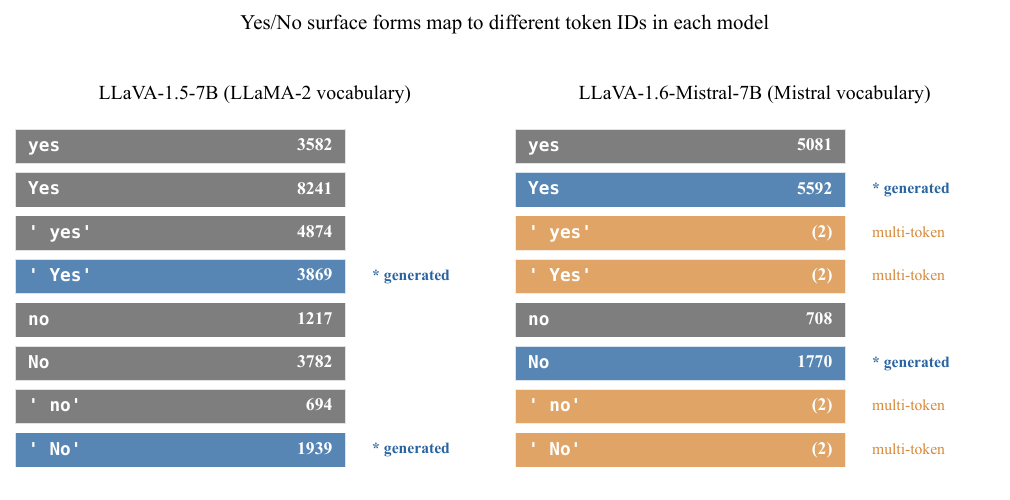}
\caption{Yes/no surface forms map to different token IDs under each model.
\textcolor{eightcolor}{Blue} rows are the forms the model actually
generates first after \texttt{ASSISTANT:}; \textcolor{neutralgray}{grey}
rows are valid single-token IDs that never appear; orange
\emph{multi-token} rows are space-prefixed forms the Mistral tokenizer
splits into two tokens, so they cannot enter a first-token comparison.}
\label{fig:crossmodel-tokens}
\end{figure}

\begin{table}[h]
\caption{Tokenizer-derived single-token IDs for yes/no surface forms under
\texttt{llava-hf/llava-v1.6-mistral-7b-hf}. The four space-prefixed forms
tokenize to two tokens each (\texttt{[28705, ID]}) and are excluded from
the dynamic single-token readout.}
\label{tab:crossmodel-tokens}
\centering
\begin{tabular}{lll}
\toprule
Form & Tokenization & In dynamic set? \\
\midrule
\texttt{yes}  & \texttt{[5081]} & yes (single token) \\
\texttt{Yes}  & \texttt{[5592]} & yes (single token) \\
\texttt{`\,yes'} & \texttt{[28705, 5081]} & no (multi-token) \\
\texttt{`\,Yes'} & \texttt{[28705, 5592]} & no (multi-token) \\
\midrule
\texttt{no}   & \texttt{[708]}  & yes (single token) \\
\texttt{No}   & \texttt{[1770]} & yes (single token) \\
\texttt{`\,no'}  & \texttt{[28705, 708]}  & no (multi-token) \\
\texttt{`\,No'}  & \texttt{[28705, 1770]} & no (multi-token) \\
\bottomrule
\end{tabular}
\end{table}

\subsection{Prompt-template robustness}

The master table fixes one template per model. To check the dynamic
readout is stable under a template change, rather than an artifact of the
LLaVA-1.5 string, we compared the first 500 POPE adversarial questions of
LLaVA-1.6-Mistral under two templates: the LLaVA-1.5 paper template and
the model's native chat template via
\texttt{processor.apply\_chat\_template}. In both runs the only first
tokens generated were id 5592 (\texttt{Yes}) and id 1770 (\texttt{No}); the
LLaMA-2 IDs never appeared. Table~\ref{tab:crossmodel-eval} reports the
four readouts.

\begin{table}[h]
\caption{Four readouts on the first 500 POPE adversarial questions under
LLaVA-1.6-Mistral-7B. \emph{dynamic\_single} compares max-logits over the
tokenizer-derived IDs $\{5081, 5592\}$ vs $\{708, 1770\}$;
\emph{string\_parse} decodes up to 6 tokens and parses the first
\texttt{yes}/\texttt{no} substring.}
\label{tab:crossmodel-eval}
\centering
\small
\begin{tabular}{llcccc}
\toprule
Prompt template & Readout & F1 & Precision & Recall & Yes-rate \\
\midrule
\multirow{4}{*}{Paper template}
 & legacy\_2tok       & 0.6667 & 0.5000 & 1.0000 & 1.000 \\
 & legacy\_8tok       & 0.0000 & 0.0000 & 0.0000 & 0.000 \\
 & dynamic\_single    & 0.8719 & 0.8352 & 0.9120 & 0.546 \\
 & string\_parse      & 0.8719 & 0.8352 & 0.9120 & 0.546 \\
\midrule
\multirow{4}{*}{Native chat template}
 & legacy\_2tok       & 0.6925 & 0.5297 & 1.0000 & 0.944 \\
 & legacy\_8tok       & 0.0000 & 0.0000 & 0.0000 & 0.000 \\
 & dynamic\_single    & 0.8621 & 0.9517 & 0.7880 & 0.414 \\
 & string\_parse      & 0.8621 & 0.9517 & 0.7880 & 0.414 \\
\bottomrule
\end{tabular}
\end{table}

The dynamic single-token and string-parse readouts agree to four decimals
under either template, the empirical face of Theorem~\ref{thm:equiv}. The
legacy eight-token readout is 0 under both, because none of the LLaMA-2
IDs is ever the argmax on a Mistral vocabulary, and the comparison between
two zero-mass logits resolves the same way on every question. Absolute F1
is not comparable to LLaVA-1.5: on the full 3{,}000-question split the
dynamic readout gives 0.8521, and the 0.8719 here is the 500-question
slice. The point of the cross-model run is narrow, that the framework
ports cleanly when token sets are re-derived from each model's tokenizer,
and that transferring the LLaVA-1.5 IDs verbatim is unsafe.

\paragraph{Provenance and verification.} Per-question logs for
LLaVA-1.6-Mistral-7B are in \texttt{experiments/cross\_model/}. For
mPLUG-Owl2, InstructBLIP-Vicuna-7B, and Qwen2-VL-7B-Instruct, the
Table~\ref{tab:multimodel} metrics are the original-run summaries in
\texttt{experiments/multi\_model/}, with a Kaggle notebook that
regenerates them. A secondary verification run (transformers
v4.57.1/v5.10.1) reproduced Qwen2-VL exactly on all splits and mPLUG-Owl2
on adversarial and popular; InstructBLIP (adversarial) differed by about
0.004 F1 under the newer environment, and one mPLUG-Owl2 (random)
regeneration hit a storage error and produced a degenerate all-yes output.
We report the original metrics for a single coherent environment; none of
the qualitative conclusions change.

\section{Literature Survey and Comparability}
\label{app:survey}

This section scopes the finding honestly: the published numbers below
cannot be placed on our scale, because no surveyed paper states its
readout and our reference ran under 4-bit NF4 quantization.
Table~\ref{tab:survey} lists the papers we have inspected that report a
LLaVA-1.5-7B POPE adversarial baseline. For each we record the reported
baseline F1, the reported result, the reported gain, and what the paper
says about the readout (\emph{not stated} if it does not specify, never
inferred from the baseline value).

\begin{table}[h]
\caption{Papers reporting LLaVA-1.5-7B POPE adversarial baselines,
verified entries as of submission. Values are as reported by each paper
(F1$\times100$).}
\label{tab:survey}
\centering
\small
\setlength{\tabcolsep}{4pt}
\begin{tabular}{@{}lcccl@{}}
\toprule
Paper & Rep.~base. & Rep.~res. & Rep.~gain & Readout \\
\midrule
Tang~et~al.~\citep{visflow2025} (adv.)  & 73.54 & 84.35 & $+10.81$ & not stated \\
Tang~et~al.~\citep{visflow2025} (pop.)  & 76.53 & 87.09 & $+10.56$ & not stated \\
Tang~et~al.~\citep{visflow2025} (rand.) & 81.54 & 89.55 & $+8.01$  & not stated \\
Seo~et~al.~\citep{seo2025} (adv.)       & 79.30 & 80.00 & $+0.70$  & not stated \\
Chen~et~al.~\citep{adaptvis2025} (adv.) & 81.00 & 81.80 & $+0.80$  & not stated \\
Huo~et~al.~\citep{huo2024sid} (adv.)    & 80.90 & 82.50 & $+1.60$  & not stated \\
Jiang~et~al.~\citep{jiang2026pnd} (adv.)& 75.06 & 80.82 & $+5.76$  & not stated \\
Jana~et~al.~\citep{jana2026bracs} (adv.)& 82.07 & 83.03 & $+0.96$  & not stated \\
\bottomrule
\end{tabular}
\end{table}

We deliberately decline to headline a claim of the form ``$N$ papers'
gains evaporate.'' Every surveyed paper leaves its readout \emph{not
stated}, so none can be confirmed from released code or explicit
statement, and the number of papers whose readout we can verify is zero.
We therefore report no ratio over verified readouts. Inferring a readout
from a baseline value would break our own rule, and substituting our 82.21
for a paper's baseline would not estimate a corrected gain: our reference
ran under 4-bit NF4 quantization, and a published baseline may differ from
it in readout, precision, prompt format, or decoding. The adversarial
baselines in Table~\ref{tab:survey} run from 73.54 to 82.07, a spread of
8.53 points that these papers do not let us decompose. Settling the
comparison takes matched reruns of each method with the readout logged and
the precision fixed, or at least the readout stated in the paper or its
code. Expanding the survey means reading each paper's evaluation code
rather than inferring the readout.

\section{Diagnostic Script and Per-Split Confusion Matrices}
\label{app:diagnostic}

\begin{verbatim}
python src/run_pope_2tok_baseline.py --split adversarial
\end{verbatim}
The output JSON carries a \texttt{top\_tokens\_generated} field that sorts
first-token IDs by frequency. Any ID appearing there but absent from the
active readout token set must be added before the rule can be trusted on
that model-template pairing.

Table~\ref{tab:confusion} reports full $2\times2$ matrices under the
eight-token rule. The adversarial false-positive rate ($182/1500 = 12.1\%$)
is roughly $4.2\times$ that of the random split ($43/1500 = 2.9\%$),
consistent with the co-occurrence design of the adversarial negatives.

\begin{table}[h]
\caption{Confusion matrices under eight-token evaluation, full
3{,}000-question splits. Counts are exact, taken from the per-question
prediction JSON files; row sums equal 3{,}000 per split.}
\label{tab:confusion}
\centering
\begin{tabular}{lrrrr}
\toprule
Split & TP & TN & FP & FN \\
\midrule
Adversarial & 1{,}174 & 1{,}318 & 182 & 326 \\
Popular     & 1{,}191 & 1{,}388 & 112 & 309 \\
Random      & 1{,}191 & 1{,}457 &  43 & 309 \\
\bottomrule
\end{tabular}
\end{table}

\end{document}